\newtheorem{theorem}{Theorem}
\journal{Control Engineering Practice}
\begin{document}

\begin{frontmatter}

%% Title, authors and addresses

%% use the tnoteref command within \title for footnotes;
%% use the tnotetext command for theassociated footnote;
%% use the fnref command within \author or \affiliation for footnotes;
%% use the fntext command for theassociated footnote;
%% use the corref command within \author for corresponding author footnotes;
%% use the cortext command for theassociated footnote;
%% use the ead command for the email address,
%% and the form \ead[url] for the home page:
%% \title{Title\tnoteref{label1}}
%% \tnotetext[label1]{}
%% \author{Name\corref{cor1}\fnref{label2}}
%% \ead{email address}
%% \ead[url]{home page}
%% \fntext[label2]{}
%% \cortext[cor1]{}
%% \affiliation{organization={},
%%             addressline={},
%%             city={},
%%             postcode={},
%%             state={},
%%             country={}}
%% \fntext[label3]{}

\title{Insect-Scale Tailless Robot with Flapping Wings: A Simple Structure and Drive for Yaw Control}

%% use optional labels to link authors explicitly to addresses:
%% \author[label1,label2]{}
%% \affiliation[label1]{organization={},
%%             addressline={},
%%             city={},
%%             postcode={},
%%             state={},
%%             country={}}
%%
%% \affiliation[label2]{organization={},
%%             addressline={},
%%             city={},
%%             postcode={},
%%             state={},
%%             country={}}

\author[label1]{Tomohiko Jimbo}
\author[label1]{Takashi Ozaki}
\author[label1]{Norikazu Ohta}
\author[label1]{Kanae Hamaguchi}

%% Author affiliation
%\affiliation{organization={},%Department and Organization
%            addressline={}, 
%            city={},
%            postcode={}, 
%            state={},
%            country={}}
            
\address[label1]{Toyota Central R\&D Labs., Inc., 480-1192 Aichi, Japan.\\
E-mail: \texttt{t-jmb@mosk.tytlabs.co.jp}}

%% Abstract
\begin{abstract}
Insect-scale micro-aerial vehicles, especially lightweight, flapping-wing robots, are becoming increasingly important for safe motion sensing in spatially constrained environments such as living spaces. However, yaw control using flapping wings is fundamentally more difficult than using rotating wings. In this study, an insect-scale, tailless robot with four paired tilted flapping wings (weighing 1.52 g) was fabricated to enable simultaneous control of four states, including yaw angle. The controllability Gramian was derived to quantify the controllability of the fabricated configuration and to evaluate the effects of the tilted-wing geometry on other control axes. This robot benefits from the simplicity of directly driven piezoelectric actuators without transmission, and lift control is achieved simply by changing the voltage amplitude. However, misalignment or modeling errors in lift force can cause offsets. Therefore, an adaptive controller was designed to compensate for such offsets. Numerical experiments confirm that the proposed controller outperforms a conventional linear quadratic integral controller under unknown offset conditions. Finally, in a tethered and controlled flight experiment, yaw drift was suppressed by combining the tilted-wing arrangement with the proposed controller.
\end{abstract}

%%Graphical abstract
%\begin{graphicalabstract}
%\includegraphics{grabs}
%\end{graphicalabstract}

%%Research highlights
%\begin{highlights}
%\item A tailless flapping-wing robot with a lightweight, insect-scale design is fabricated for yaw control. 
%\item Controllability is analyzed through a Gramian to reveal the effects of tilted wings. 
%\item An adaptive controller compensates for unknown offsets to ensure stable and accurate flight. 
%\item Simulations and experiments confirm that yaw drift is suppressed with simple actuation. 
%\end{highlights}

%% Keywords
\begin{keyword}
Flapping-wing robot \sep Yaw control \sep Adaptive control \sep Micro aerial vehicle \sep Controllability
\end{keyword}

\end{frontmatter}

%% Add \usepackage{lineno} before \begin{document} and uncomment 
%% following line to enable line numbers
%% \linenumbers

%% main text
%%%%%%%%%%%%%%%%%%%%%%%%%%%%%%%%%%%%%%%%%%%%%%%%%%%%%%%%%%%%%%%%%%%%%%%%%%%%%%%%
\section{Introduction}

Unmanned aerial vehicles (UAVs) are increasingly being used to monitor and inspect large environments such as farms, roads, buildings, and bridges. They are also expected to be employed in spatially constrained environments such as living spaces. In confined spaces, UAVs must possess the ability to hover, exhibit high maneuverability, and be lightweight to ensure safety in the event of collisions. Therefore, flapping-wing robots, inspired by insects and birds, are more suitable for constrained spaces than the rotating-wing robots commonly used in UAVs. 

\textcolor{black}{
High maneuverability and efficiency of flapping wings have been reported in various studies on the aerodynamics, kinematics and control of flapping-wing robots \citep{1999Dickinson+, 2012Christopher+, 2018Helbling+, 2019Phan+Review}. 
While the incorporation of a tail enables passive flight and extends flight duration \citep{2016Rosen, 2016delfly}, it also diminishes maneuverability and complicates hover control due to the tail's damping effect and sensitivity to external disturbances such as wind. 
In this paper, we consider a tailless configuration to achieve full controllability in a compact design, focusing on how simple flapping mechanisms can maintain yaw stability.
}%textcolor

% --------------------------
\begin{figure}[t]
\centering
\includegraphics[width=8cm]{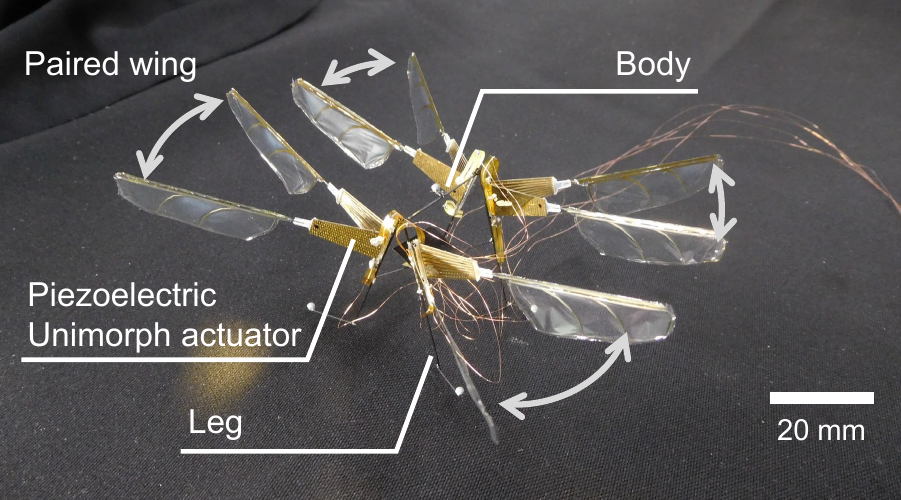}
\caption{
%Newly developed robot with four paired wings, eight piezoelectric actuators, a glass-fiber-reinforced plastic body, carbon-fiber-reinforced plastic legs, a wingspan of $110$ mm, and a weight of $1.52$ g.
\textcolor{black}{
Flapping-wing robot fabricated with four paired wings tilted by $\beta=20$ deg and $\gamma=60$ deg,
equipped with eight piezoelectric actuators, a glass-fiber-reinforced plastic body, carbon-fiber-reinforced plastic legs, a wingspan of $110$ mm, and a weight of $1.52$ g.
}%textcolor
\label{f:flappingwingRobots}
}
\end{figure}

\textcolor{black}{
Several approaches to flapping-wing actuation have been studied and summarized in \citep{2018Helbling+, 2019Phan+Review, 2024Chen+}. 
}%textcolor
\textcolor{black}{
Electromagnetic motors are a promising approach, and many robots have been proposed, often employing complex gear or linkage mechanisms to achieve reciprocating motion \citep{2009Croon+, 2012Keennon+, 2015Nguyen+, 2018Karasek+, 2019Phan+}.
\textcolor{black}{
More recent work has explored actively controlled linkage mechanisms to improve roll and pitch control torques \citep{2025Wu+}.
}%textcolor
They are heavy, weighing $10$ g or more. 
In contrast, 
%we aim to explore the simplicity and lightness achievable with direct piezoelectric drive. 
Piezoelectric actuators, by enabling direct flapping motion without mechanical transmission, make it possible to design 
extremely insect-scale robots weighing less than $1$ g \citep{2008Wood, 2013Ma+, 2018_3rd_Ozaki, 2019Chen+}. 
}%textcolor
\textcolor{black}{
These flapping-wing actuations enable lightweight and inherently safe designs, but there are still challenges in achieving sufficient yaw control capability due to the limited yaw torque and coupling among control axes.
}%textcolor

Yaw control, which keeps onboard sensors aligned in the required direction, simplifies the design of the horizontal controller. However, generating yaw torque using flapping wings is complex \citep{2014Helbling+, 2015Teoh, 2019Fuller}. %2019Steinmeyer+, 2021Chukewad+}.
As demonstrated in \citep{2019Steinmeyer+}, yaw torque can be generated by adjusting the speed ratio between the upstroke and downstroke of piezoelectric actuators.
However, this process is sensitive to manufacturing errors and parameter variations \citep{2021Chukewad+}. 
Therefore, yaw control with flapping wings should be achieved through a simple structure and a simple control signal.

\textcolor{black}{
Recent work has demonstrated that high flight performance can be achieved in insect-scale flapping-wing robots by exploiting advanced computation and learning-based control frameworks. In particular, \cite{2025Hsiao+} reported impressive aggressive flight maneuvers using a learning-based model predictive control approach on a 750-mg flapping-wing robot. 
However, this performance is achieved by relying on substantial external computational resources, making onboard implementation nontrivial. Moreover, due to the platform’s design, active yaw torque generation is not considered. In practice, yaw motion is mitigated through manual pre-flight calibration of the wing-mount orientation, thereby ensuring that the yaw angular drift rate remains below 200 deg/s. Consequently, yaw-stabilized hovering flight is not explicitly addressed.
On the other hand, model-based optimal control provides a computationally efficient alternative for insect-scale flapping-wing robots. An LQR-based controller demonstrated in \cite{npjRobotics2025} achieved stable near-hover flight and trajectory tracking, while yaw control remains outside the scope of that study.
}%textcolor

We previously investigated a direct-driven flapping wing using a piezoelectric unimorph actuator \citep{2018_1st_Ozaki, 2018_2nd_Ozaki}. It is simple because it has no displacement-enhancing structure (transmission-free structure), 
and the wings are driven by simply changing the voltage amplitude at the resonance frequency. 
Furthermore, to suppress the coupling effect between the wings and the body \citep{2018_3rd_Ozaki}, we fabricated a robot with three paired-wing actuators \citep{2020_Jimbo}.  
However, because it could only control three states simultaneously, it was unable to generate yaw torque.
Therefore, we propose an insect-scale flapping-wing robot weighing 1.52 g (Fig.~1), which generates yaw torque using a tilting arrangement of four paired-wing actuators. 
\textcolor{black}{
The effects of the tilted wings on controls other than yaw were examined using the controllability Gramian to evaluate the characteristics of the fabricated configuration.
}%textcolor
Furthermore, despite the simple structure, manufacturing errors can occur, resulting in a lift-force offset.
In addition, it is difficult to accurately measure the instantaneous lift forces generated during the flapping motion. Therefore, we modeled the offset force and torque acting on the body and designed an adaptive control system. The major contributions of this study are summarized below:
\begin{itemize}
\item A novel flapping-wing robot with four paired tilted wings is fabricated to enable yaw control. The wings are driven by piezoelectric direct drive actuators; there is no transmission, and the lift force can be controlled by simply changing the voltage amplitude. 
\textcolor{black}{
\item The controllability of the fabricated wing arrangement is analyzed using a controllability Gramian. 
The advantages of the tilting angle for yaw control, as well as the limitations for other controls, are clarified. 
}%textcolor
\item An adaptive controller is designed to account for lift force offset. Through numerical experiments, we compared its performance with a linear quadratic integral (LQI) controller and confirmed its adaptability to an unknown lift offset.
\item We confirm that the yaw drift of the insect-scale flapping-wing robot can be suppressed through a controlled flight experiment.
\end{itemize}
\textcolor{black}{
The remainder of this paper is organized as follows:
\textcolor{black}{
Section~2 describes the structural design and direct-drive scheme of the flapping-wing robot, including the paired-wing configuration and the piezoelectric actuation mechanism.
}%textcolor
In \textcolor{black}{Section~3}, a control-oriented model of the flapping-wing robot with four paired-wing actuators is derived,
and the uncertainties of the lift force and torque are formulated.
The controllability Gramian is also derived to provide a basis for analyzing the control characteristics of the fabricated configuration.
\textcolor{black}{Section~4} presents the design of an adaptive controller that compensates for unknown lift-force offsets.
In \textcolor{black}{Section~5}, numerical simulations are conducted to evaluate the controllability characteristics
and to compare the performance of the proposed controller with that of a conventional LQI controller.
\textcolor{black}{Section~6} demonstrates a controlled flight experiment of the flapping-wing robot.
Finally, \textcolor{black}{Section~7} concludes the paper.
}%textcolor

%//////////////////////////////////////////////////////////////////////////////////////////////////////////////////////////////////////////
%\newpage
\section{\textcolor{black}{Structural design and drive of flapping-wing robot}}

\begin{figure}[b!]
\centering
\includegraphics[width=8cm]{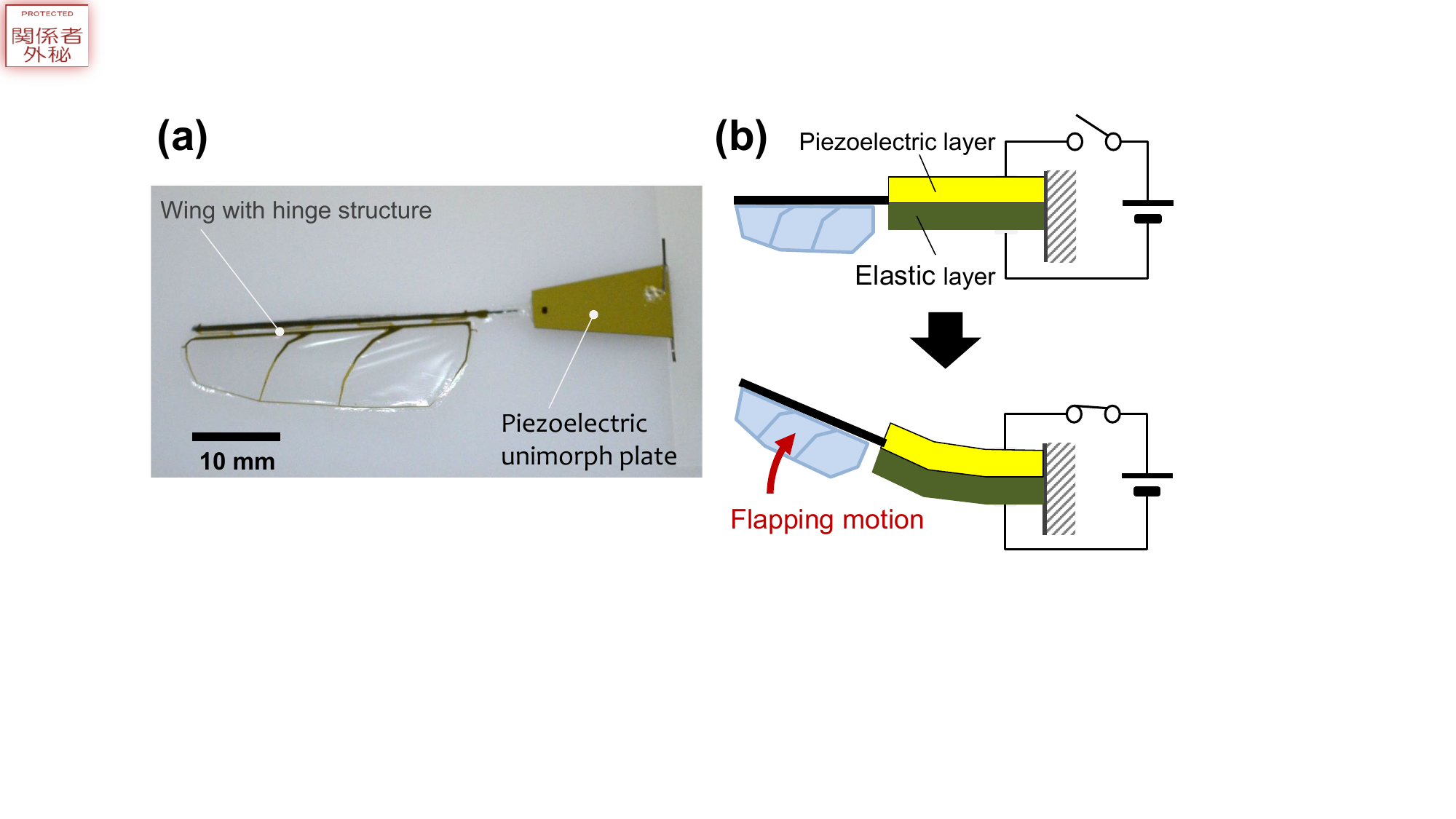}
\caption{
\textcolor{black}{
Direct-driven piezoelectric actuator. (a) Wing with a hinge structure attached to a piezoelectric unimorph plate. (b) Principle of the flapping motion.
}%textcolor
\label{f:actuator}
}
\end{figure}

% Paragraph 1: Direct-driven flapping-wing actuator (Fig. 2)
\textcolor{black}{
This study employs a direct-driven piezoelectric flapping-wing actuator, as shown in Fig. \ref{f:actuator}. The actuator consists of a wing directly attached to a piezoelectric unimorph plate without any mechanical transmission, resulting in a simple and lightweight structure. Passive wing rotation during the flapping stroke plays a key role in lift generation; therefore, a flexible hinge structure based on a polyimide torsional beam is placed near the top edge of the wing. To achieve large bending deformation, a single-crystal Pb(In\textsubscript{1/2}Nb\textsubscript{1/2})O\textsubscript{3}--Pb(Mg\textsubscript{1/3}Nb\textsubscript{2/3})O\textsubscript{3}--PbTiO\textsubscript{3} (PIN--PMN--PT) material is used for the unimorph actuator. When a sinusoidal voltage at an appropriate frequency is applied to the plate, the unimorph plate bends due to the piezoelectric effect, directly generating the flapping motion.
}% textcolor 

% Paragraph 2: Paired-wing configuration (Fig. 1)
\textcolor{black}{
As shown in Fig. 1, two wings are paired to form a basic flapping unit in order to suppress coupling effects between the wings and the body, as well as between individual wings. This paired-wing configuration was investigated in our previous work (Jimbo et al., 2020) and is retained in the present robot as a fundamental structural element.
}%textcolor

% Paragraph 3: Simple drive characteristic and force representation (revised)
\textcolor{black}{
The flapping-wing actuator is driven by a simple direct-drive scheme in which the flapping amplitude is controlled solely by the applied voltage amplitude V. As reported in our previous study (Jimbo et al., 2020), both the stroke amplitude and the generated lift force reach their maximum values at a resonant frequency of approximately 115 Hz. When the actuator is driven at this resonant frequency, the lift force generated by a single paired wing increases with the applied voltage amplitude, reaching approximately 0.5 gf at V = 120 V. Based on this experimentally identified drive characteristic, the wing force is represented as a function of the voltage amplitude, 
\begin{equation}
    f_w = h(V),
    \label{eq:fV}
\end{equation}
which is adopted in the control-oriented modeling described in the following section.
}%textcolor

% Paragraph 4: Design implication for yaw control (final with citations)
\textcolor{black}{
Yaw-torque generation methods for flapping-wing robots commonly modify the speed ratio between the upstroke and downstroke of piezoelectric actuators by introducing a second-harmonic frequency component to the fundamental flapping signal \citep{2019Steinmeyer+,2021Chukewad+}. While such approaches have been experimentally shown to generate yaw torque, they require careful tuning of the second-harmonic component, including its relative magnitude and, in some cases, its phase, and are highly sensitive to manufacturing errors and parameter variations \citep{2021Chukewad+}. Moreover, as the drive signal becomes more complex, the associated signal-generation circuitry tends to increase in both weight and power consumption, which is undesirable for insect-scale, battery-powered platforms.
In contrast, the proposed design employs a direct-driven actuator operated at a fixed resonant frequency, where control is achieved solely through the voltage amplitude. This provides a simple and robust foundation for yaw-torque generation, which is formulated in the following modeling section.
}%textcolor

%//////////////////////////////////////////////////////////////////////////////////////////////////////////////////////////////////////////
%\newpage
\section{Modeling}

%*******************5**************************************************************************************
\subsection{
Tilted-wing arrangement and yaw torque generation
}

% --------------------------
\begin{figure}[t]
\centering
\includegraphics[width=7.5cm]{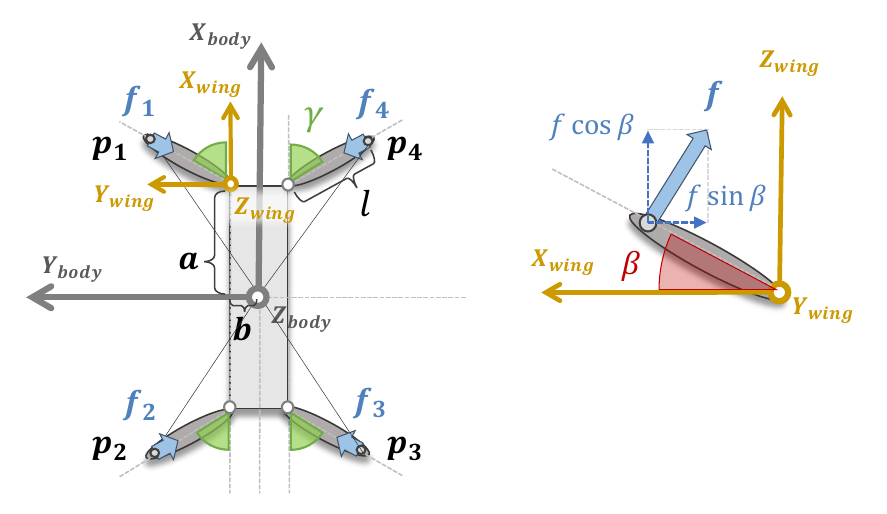}
\caption{
\textcolor{black}{
Wing arrangement and lift force direction
}%textcolor
}
\label{f:WingArrangement}
\end{figure}
% --------------------------

Fig.~\ref{f:WingArrangement} shows the wing arrangement of the newly developed four-paired-wing robot.
The wing arrangement is different from that of the three-paired-wing robot developed in our previous study \citep{2020_Jimbo}.
Furthermore, to generate yaw torque, the four paired wings are tilted slightly by an angle $\beta$, which is the rotation angle around the Y-axis of the wing coordinate system.

The body force $f(=[f_x, f_y,f_z]^\top)$ and torque ${\tau}(=[\tau_x, \tau_y,\tau_z]^\top)$ 
are given by
\begin{equation}
\begin{bmatrix}
f \\
\tau
\end{bmatrix} = M f_w, 
\label{eq:ftau_body}
\end{equation}
where ${f}_{w}=[f_1, f_2,  f_3,  f_4]^\top$ is the vector of wing forces generated by applying flapping amplitude $V$,
and
\begin{equation*}
M=
\begin{bmatrix}
M_1 \\
M_2
\end{bmatrix}
=
\begin{bmatrix}
{e}_1 & {e}_2 & {e}_3 & {e}_4 \\
{p}_1 \times {e}_1 &
{p}_2 \times {e}_2 &
{p}_3 \times {e}_3 &
{p}_4 \times {e}_4
\end{bmatrix} \in \mathbb{R}^{6\times 4}
\end{equation*}
is the mixing matrix.
Here, for the wing $i$,
%--------------------------------------------------
\begin{eqnarray*}
%----------
&&e_i  =
\begin{bmatrix}
\cos \gamma_i \sin \beta_i \\
\sin \gamma_i \sin \beta_i \\
\cos \beta_i
\end{bmatrix},
%----------
p_i =
\begin{bmatrix}
a_i + l \cos \gamma_i \cos \beta_i \\
b_i - l \sin \gamma_i  \cos \beta_i \\
- l \sin \beta_i
\end{bmatrix}, %\\
\end{eqnarray*}
%--------------------------------------------------
are the lengths of the body along the X- and Y-axes, respectively, 
$\beta_1 = \beta_2 = \beta_3 = \beta_4 = -\beta$, 
$\gamma_1=\gamma, \gamma_2=\pi-\gamma, \gamma_3=\pi + \gamma, \gamma_4=2\pi - \gamma$, 
$\gamma$ is the rotation angle around the Z-axis of the body coordinate system, and
$l$ is the distance between the mounting position of each wing and lift-force center.
The generated yaw torque $\tau_z$ is expressed as 
\begin{equation*}    
\tau_z = \sum_{i=1}^4 \left(a_i \sin \gamma_i - b_i \cos \gamma_i\right) f_i \sin \beta_i. 
\end{equation*}
Note that when $\beta_i = 0  (i=1,\cdots,4)$, $\tau_z=0$.

%********************************************************************************************************

%simple structure drive system
\subsection{
Representation of unknown offsets
}

The body force and torque in (\ref{eq:ftau_body}) are challenging to model accurately due to unmodeled aerodynamic effects, manufacturing errors, and time-dependent variations in system properties. Piezoelectric materials also exhibit characteristic variations, 
with lift force variations of approximately 10--20 $\%$ in the paired wings fabricated in \citep{2020_Jimbo}. 
A simple drive structure can mitigate these uncertainties, but they cannot be completely ignored. 
Therefore, the uncertainties in body force and torque are expressed as offsets.
The actual body force $f_{body} (=[f_{body,x}, f_{body,y}, f_{body,z}]^\top)$ and torque $\tau_{body}$ are given by
\begin{equation}
\begin{bmatrix}
{f}_{body} \\
{\tau}_{body}
\end{bmatrix}
=
\biggl({M} + \delta_M \biggr)
\biggl({f}_{w} + \delta_{f_w} \biggr)
=
\begin{bmatrix}
{f} - {f}_{o}\\
{\tau} - {\tau}_{o}
\end{bmatrix},
\label{eq:actualfandT}
\end{equation}
where
the offset force and torque are given by
\begin{equation}
\begin{bmatrix}
{f}_{o} \\
{\tau}_{o}
\end{bmatrix} \approx 
- \delta_M{f}_{w} 
- {M} \delta_{f_w}.
\label{eq:offset}
\end{equation}
They depend on $\delta_{f_w} \in \mathbb{R}^{4}$ and $\delta_M \in \mathbb{R}^{6 \times 4}$.
$\delta_{f_w}$ corresponds to the error of the wing force voltage modeling $f_w = h(V)$.
$\delta_M$ is the misalignment with respect to $\beta$, $\gamma$, and $l$.

%*********************************************************************************************************
\subsection{Flight dynamics}

Consider the rigid-body model of the robot shown in Fig.~\ref{f:WingArrangement}.
Using the actual force and torque, ${f}_{body}$ and ${\tau}_{body}$ in (\ref{eq:actualfandT}),
the dynamics are represented by
\begin{eqnarray}
\label{eq:Newton}
m \dot{{v}} &= & {R}{f}_{body} -m{g},  \\
\label{eq:Euler}
{J}\dot{{\omega}} &=& {\tau}_{body}-\left( {\omega}\times {J}{\omega} \right),
\end{eqnarray}
where $v(=[v_x, v_y, v_z]^\top)$ is the translational velocity of the body in the global coordinate system,
$\omega(=[\omega_{B,x}, \omega_{B,y}, \omega_{B,z}]^\top)$ is the angular velocity of the body in the body coordinate system,
$m$ and ${J}\in \mathbb{R}^{3\times 3}$ are the mass and inertia of the body, respectively,
${R}$ is the rotation matrix,
${g}=[0,0,g]^\top$, $g$ is the gravitational acceleration.

The relationship between $\omega$ and the attitude $\eta = [\phi, \theta, \psi]^\top$ of the body is described by
\begin{equation}
{\omega} = {G}\dot{{\eta}},
\label{eq:omega}
\end{equation}
where
\begin{equation}
{G} =
\begin{bmatrix}
1 & 0 & -\sin \theta \\
0 & \cos \phi & \cos \theta \sin \phi \\
0 & -\sin \phi & \cos \theta \cos \phi
\end{bmatrix}. \nonumber
\end{equation}

%*********************************************************************************************************
\subsection{Control-oriented formulation}

The translational dynamics on the horizontal plane is derived from (\ref{eq:Newton}), assuming the hovering state, $v_z \approx 0$, and $\omega_z  \approx 0$, to obtain
\begin{equation}
\begin{bmatrix}
\dot v_{x}^B \\
\dot v_{y}^B
\end{bmatrix}
=
\begin{bmatrix}
\theta \\
-\phi
\end{bmatrix}g,
\label{eq:xy}
\end{equation}
where $v_{x}^B$ and $v_{y}^B$ are the translational velocities in the body coordinate system.

When approximating the rotation matrix $R$ using  $\phi $ and $\theta$, the dynamics in the vertical direction, that is, along the Z-axis, is given by
\begin{equation}
%m \dot v_z = 
m \ddot z = f_{z} - f_{o,z} - m g,
\label{eq:vz}
\end{equation}
where $z$ is the altitude, that is, the vertical position of the body.
Note that we assume $|\theta m g| \gg |f_{body,x}|$, $|\phi m g| \gg |f_{body,y}|$, and $|f_{body,z}| \gg |-\theta f_{body,x} +\phi f_{body,y}|$.

For the rotational dynamics, (\ref{eq:actualfandT}) and (\ref{eq:Euler})  are directly used to design the flight controller.

The transient response of the wing force to the flapping amplitude exhibits a lag in relation to the dynamics of the wing position. 
The lag can be approximated by the following first-order lag system:
\begin{eqnarray}
T\dot{f}_{z} &=& f_{d,z} - f_{z}, \label{eq:f-delay} \\
T\dot{{\tau}} &=& {\tau}_d - {\tau}, \label{eq:tau-delay}
\end{eqnarray}
where $T(>0)$ is a time constant and $f_{d,z} \in \mathbb{R}$ and ${\tau}_d \in \mathbb{R}^3$ are the demanded force and torque, respectively. Note that the lift force oscillates due to the reciprocating motion of the wing.
In this study, we focus on the steady lift force, ignoring high-frequency oscillations. 
While this simplification may limit control accuracy and design flexibility, it streamlines design, implementation, and maintenance while avoiding unexpected resonance.

%*********************************************************************************************************
\subsection{Controllability}
\label{sec:controllability}

From (\ref{eq:Euler}) and (\ref{eq:vz}), we consider the continuous-time linear time-invariant system 
\begin{equation}
    \dot \xi = A \xi + B f_w, 
    \label{eq:LTI}
\end{equation}
where $\xi = [m v_z, (J\omega)^\top]^\top \in \mathbb{R}^4$, $A={\mathbf 0}\in \mathbb{R}^{4\times 4}$, and $B=[M_1(3,:)^\top , M_2^\top]^\top\in \mathbb{R}^{4\times 4}$. This equation is weight-compensated and ignores the offset and centrifugal Coriolis forces.

For the driftless system of (\ref{eq:LTI}), the controllability Gramian at time $t_{\rm f}$ is expressed as
\begin{equation*}
    W_c(t_{\rm f}) = \int_0^{t_{\rm f}} {\rm e}^{At}BB^\top {\rm e}^{A^\top t} dt = t_{\rm f} B B^\top.
    %\in \mathbb{R}^{4 \times 4}.
\end{equation*}
If $W_c(t_{\rm f})$, i.e. $BB^\top$ is a nonsingular matrix, a state transition to an arbitrary target can be realized at time $t_{\rm f}$. Then, the minimum input energy is expressed in quadratic form using the target state and the inverse of $W_c(t_{\rm f})$ \citep{Kailath1980}. 
As $BB^\top$ is singular for $B\in \mathbb{R}^{4\times 3}$ for the robot with three paired wings in \citep{2020_Jimbo}, controlling four states of $\xi$ simultaneously requires a robot with four paired wings.

%//////////////////////////////////////////////////////////////////////////////////////////////////////////////////////////////////////////
%\newpage
\section{Flight controller}

The offset force and torque in (\ref{eq:offset}) are unknown parameters.
They are uncertain and cannot be measured.
Furthermore, a change in the relationship between $f_w$ and $V$ occurs due to damage during use.
Therefore, an adaptive controller is adopted in this study.

%*********************************************************************************************************
\subsection{Velocity control}

To track the velocities $v_x^B$ and $v_y^B$ on the horizontal plane to the target $v_{x,d}^B$ and $v_{y,d}^B$, respectively,
considering (\ref{eq:xy}),
the corresponding target angles of the pitch $\theta_d$ and roll $\phi_d$ are set to
% --------------------------
\begin{eqnarray*}
\theta_d &=& -h_x \left(v_x^B - v_{x,d}^B \right)/g, \label{eq:theta_d} \\
\phi_d &=& h_y \left(v_y^B - v_{y,d}^B \right)/g, \label{eq:phi_d}
\end{eqnarray*}
% --------------------------
where $h_x$ and $h_y$ are positive constants.

%*********************************************************************************************************
\subsection{Attitude control}

To track the attitude ${\eta}$ to the target ${\eta}_d(= [\phi_d, \theta_d, \psi_d]^\top)$,
from (\ref{eq:omega}),
the target angular velocity of the robot $\omega$ and higher-order derivatives are designed as follows:
% --------------------------
\begin{equation}
{\omega}_d = - G {K}_\eta {e}_\eta, \ \ \ 
\dot{{\omega}}_d \approx -G {K}_\eta \dot{{\eta}}, \ \ \ 
\ddot{{\omega}}_d \approx -G {K}_\eta \ddot{{\eta}}
\label{eq:omegad}
\end{equation}
% --------------------------
where ${e}_\eta = {\eta} - {\eta}_d$,
${K}_\eta(={\rm diag}(k_{\eta1}, k_{\eta2}, k_{\eta3}))$ is a positive diagonal matrix.

To track the angular velocity ${\omega}$ to the target ${\omega}_d$,
from (\ref{eq:Euler}) and (\ref{eq:tau-delay}),
the demanded torque ${\tau}_d$ and the adaptive law to estimate the offset torque $\hat{{\tau}}_o$ are designed as 
% --------------------------
\begin{eqnarray}
{\tau}_d &=& - {K}_\omega {s}_\omega
+ J\dot{{\omega}}_r  + {F}
     + T \left(J \ddot{{\omega}}_r  + \dot{{F}}\right)
    + \hat{{\tau}}_o,
\label{eq:adapt_tau}
\\
\dot{\hat{{\tau}}}_o &=& - {\Gamma}_\omega {s}_\omega,
\label{eq:adapt_tauo}
\end{eqnarray}
% --------------------------
where $F(=\omega \times  J \omega)$ is the centrifugal Coriolis force,
% --------------------------
\begin{eqnarray}
{s}_\omega           &=& \dot{{e}}_\omega +{\Lambda}_\omega {e}_\omega = \dot{{\omega}} - \dot{{\omega}}_r, \label{eq:s_omega}\\
{e}_\omega           &=&{\omega}-{\omega}_d,  
\nonumber \\
\dot{{\omega}}_r &=& \dot{{\omega}}_d - {\Lambda}_\omega ({\omega} -{\omega}_d), \nonumber
\end{eqnarray}
% --------------------------
${\Lambda}_\omega (= {\rm diag}(\lambda_{\omega1}, \lambda_{\omega2}, \lambda_{\omega3}))$,
${K}_\omega  (= {\rm diag}(k_{\omega1}, k_{\omega2}, k_{\omega3}))$, and
${\Gamma}_\omega (= {\rm diag}(\gamma_{\omega1}, \gamma_{\omega2}, \gamma_{\omega3}))$
are positive diagonal matrices.

\begin{theorem}
\label{thm:attitude}
On applying the control input of (\ref{eq:adapt_tau}) and the adaptive law of (\ref{eq:adapt_tauo}) with (\ref{eq:s_omega}) to (\ref{eq:Euler}) and (\ref{eq:tau-delay}), the sliding variable $s_\omega$ satisfies $s_\omega \rightarrow 0$ as $t \rightarrow \infty$ 
\textcolor{black}{
, under the assumption that the unknown offset torque $\tau_o$ is constant.
}%textcolor
\end{theorem}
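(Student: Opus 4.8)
The plan is to use a Lyapunov-based adaptive (Slotine--Li type) argument, augmented to cope with the first-order actuator lag in (\ref{eq:tau-delay}). First I would introduce the parameter estimation error $\tilde{\tau}_o = \hat{\tau}_o - \tau_o$ and assume the physical offset $\tau_o$ is constant (so that $\dot{\tilde{\tau}}_o = \dot{\hat{\tau}}_o$), which is the standard premise underlying the integral adaptive law (\ref{eq:adapt_tauo}). The target is a candidate of the form
\begin{equation*}
V = \tfrac{1}{2}\, s_\omega^\top (T J)\, s_\omega + \tfrac{1}{2}\, \tilde{\tau}_o^\top {\Gamma}_\omega^{-1}\, \tilde{\tau}_o,
\end{equation*}
which is positive definite because $T>0$, $J\succ 0$, and ${\Gamma}_\omega\succ 0$; the weighting $TJ$ rather than $J$ is the device that lets the Lyapunov function absorb the actuator-lag dynamics.

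The computational heart of the proof is deriving the closed-loop dynamics of the sliding variable. Using $J s_\omega = J\dot{\omega} - J\dot{\omega}_r$ and substituting Euler's equation (\ref{eq:Euler}) together with $\tau_{body} = \tau - \tau_o$ from (\ref{eq:actualfandT}) gives $J s_\omega = \tau - \tau_o - F - J\dot{\omega}_r$. Differentiating once, inserting $\dot{\tau} = (\tau_d - \tau)/T$ from the lag (\ref{eq:tau-delay}), and eliminating $\tau$ through the previous identity yields
\begin{equation*}
T J \dot{s}_\omega = \tau_d - \tau_o - F - J\dot{\omega}_r - T\dot{F} - T J\ddot{\omega}_r - J s_\omega.
\end{equation*}
I would then substitute the control law (\ref{eq:adapt_tau}); the terms $J\dot{\omega}_r$, $F$, $T\dot{F}$, and $T J\ddot{\omega}_r$ are precisely the feed-forward cancellations built into $\tau_d$, so everything collapses to
\begin{equation*}
T J \dot{s}_\omega = -(K_\omega + J)\, s_\omega + \tilde{\tau}_o .
\end{equation*}
This clean residual --- a damped linear system in $s_\omega$ driven only by the estimation error --- is the crux that the whole controller is engineered to produce.

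With this in hand the Lyapunov derivative is routine: $\dot{V} = s_\omega^\top (TJ)\dot{s}_\omega + \tilde{\tau}_o^\top {\Gamma}_\omega^{-1} \dot{\tilde{\tau}}_o = -s_\omega^\top (K_\omega + J) s_\omega + s_\omega^\top \tilde{\tau}_o + \tilde{\tau}_o^\top {\Gamma}_\omega^{-1}\dot{\hat{\tau}}_o$. Plugging the adaptive law (\ref{eq:adapt_tauo}) makes the two indefinite cross terms cancel, leaving
\begin{equation*}
\dot{V} = -\, s_\omega^\top (K_\omega + J)\, s_\omega \le 0,
\end{equation*}
which is negative semi-definite because $K_\omega \succ 0$ and $J \succ 0$.

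The main obstacle is that $\dot{V}$ is only negative \emph{semi}-definite (it controls $s_\omega$ but says nothing about $\tilde{\tau}_o$), so boundedness of $V$ alone does not deliver $s_\omega \to 0$; I would close the argument with Barbalat's lemma. From $V$ bounded and nonincreasing, both $s_\omega$ and $\tilde{\tau}_o$ are bounded; the closed-loop relation above then shows $\dot{s}_\omega$ is bounded, hence $\dot{V}$ is uniformly continuous. Barbalat's lemma gives $\dot{V} \to 0$, and since $K_\omega + J \succ 0$ this forces $s_\omega \to 0$. The delicate points to verify are the constant-$\tau_o$ assumption (or a slowly-varying bound) and the boundedness of the reference signals $\dot{\omega}_r$, $\ddot{\omega}_r$ and of $\dot{F}$ that is needed for the uniform-continuity step.
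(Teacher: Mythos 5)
Your proposal matches the paper's proof in Appendix~A essentially step for step: the same Lyapunov candidate $V_\omega = \tfrac{1}{2}s_\omega^\top (TJ)s_\omega + \tfrac{1}{2}\tilde{\tau}_o^\top \Gamma_\omega^{-1}\tilde{\tau}_o$, the same closed-loop sliding dynamics $TJ\dot{s}_\omega + (J+K_\omega)s_\omega = \tilde{\tau}_o$, and the same cancellation yielding $\dot{V}_\omega = -s_\omega^\top(J+K_\omega)s_\omega \le 0$. The only difference is the final step---you invoke Barbalat's lemma where the paper cites the invariant set theorem---and your choice is arguably the more careful one for this non-autonomous closed loop, with your explicit caveats (constant $\tau_o$, boundedness of $\dot{\omega}_r$, $\ddot{\omega}_r$, $\dot{F}$) making precise what the paper leaves implicit.
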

\begin{proof}
See Appendix A for the proof. %\qed
\end{proof}
From Theorem \ref{thm:attitude}, considering (\ref{eq:s_omega}), $\omega$ asymptotically converges to $\omega_d$ as $t \rightarrow \infty$.
Similarly, from (\ref{eq:omega}) and (\ref{eq:omegad}), $\eta$ asymptotically converges to $\eta_d$ as $t \rightarrow \infty$.
% 

%*********************************************************************************************************
%
\subsection{Vertical control}
To control the flapping-wing robot in the vertical direction, 
from (\ref{eq:vz}) and (\ref{eq:f-delay}),
the demanded force $f_{d,z}$ and the adaptive law to estimate the offset force $\hat{f}_{o,z}$ are designed as
% --------------------------
\begin{eqnarray}
f_{d,z}/m &=& - k_z s_z + \ddot{z}_r + T \dddot{z}_r + g + \hat{f}_{o,z}/m,  \label{eq:adapt_f} \\
\dot{\hat{f}}_{o,z} &=& - \gamma_z s_z/m, \label{eq:adapt_fo}
\end{eqnarray}
% --------------------------
where, for the vertical velocity control to track $\dot{z}$ to the target $\dot{z}_d$, 
% --------------------------
\begin{eqnarray}
s_z &=& \ddot{e}_z + \lambda_z \dot{e}_z = \ddot{z} - \ddot{z}_r, \label{eq:s_vz} \\
\ddot{z}_r &=&  - \lambda_z (\dot{z}-\dot{z}_d), 
\nonumber
\end{eqnarray}
and for the vertical position (altitude) control to track $z$ to the target $z_d$,  
\begin{eqnarray}
s_z &=& \ddot{e}_z + 2\lambda_z \dot{e}_z + \lambda_z^2 e_z = \ddot{z} - \ddot{z}_r, \label{eq:s_z} \\
\ddot{z}_r &=&  - 2\lambda_z \dot{z} -\lambda_z^2(z-z_d). \nonumber
\end{eqnarray}
% --------------------------
Here, $e_z = z - z_d$. 
$\lambda_z$, $k_z$, and $\gamma_z$ are positive constants.

\begin{theorem}
\label{thm:vertical}
On applying the control input of (\ref{eq:adapt_f}) and the adaptive law of (\ref{eq:adapt_fo}) with (\ref{eq:s_vz}) or (\ref{eq:s_z}) to (\ref{eq:vz}) and (\ref{eq:f-delay}), the sliding variable $s_z$ satisfies $s_z \rightarrow 0$ as $t \rightarrow \infty$
\textcolor{black}{
, under the assumption that the unknown offset force $f_{o,z}$ is constant.
}%textcolor
\end{theorem}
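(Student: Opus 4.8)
The plan is to reproduce, for the scalar vertical channel, the adaptive sliding-mode argument already used for the attitude loop in Theorem~\ref{thm:attitude}: build a composite Lyapunov function in $(s_z,\tilde f_{o,z})$, show it is nonincreasing, and finish with Barbalat's lemma. First I would set $\tilde f_{o,z} := \hat f_{o,z} - f_{o,z}$ and invoke the standing premise that the offset is (slowly) constant, $\dot f_{o,z}\approx 0$, so that $\dot{\tilde f}_{o,z}=\dot{\hat f}_{o,z}$ and the update \eqref{eq:adapt_fo} acts directly on the estimation error.

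Next I would reduce the plant to a first-order error system in $s_z$. Since $s_z=\ddot z-\ddot z_r$ and $\ddot z=(f_z-f_{o,z})/m-g$ from \eqref{eq:vz}, the force $f_z$ enters $s_z$ algebraically while the demanded force $f_{d,z}$ enters only through $\dot s_z$ via the lag \eqref{eq:f-delay}; the channel therefore has relative degree one. Differentiating $m s_z = f_z - f_{o,z} - m g - m\ddot z_r$, eliminating $\dot f_z$ and $f_z$ with \eqref{eq:f-delay}, and substituting the control law \eqref{eq:adapt_f}, I expect the feedforward terms to cancel pairwise --- in particular the $T\dddot z_r$ term in \eqref{eq:adapt_f} is placed exactly to annihilate the lag-induced $\dddot z_r$ contribution, and the $\ddot z_r$ and $g$ terms cancel likewise --- leaving
\begin{equation*}
T\dot s_z = -(k_z+1)\,s_z + \frac{\tilde f_{o,z}}{m},
\end{equation*}
a stable first-order relation forced only by the estimation error (the $-s_z$ beyond $-k_z s_z$ is the damping the actuator lag itself contributes). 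Crucially, this derivation is identical for either sliding variable \eqref{eq:s_vz} or \eqref{eq:s_z}, because they differ only in the definition of $\ddot z_r$, which is cancelled the same way in both.

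I would then take $V=\tfrac{T}{2}s_z^2+\tfrac{1}{2\gamma_z}\tilde f_{o,z}^2$ and differentiate along the closed loop. The cross term $\tfrac{1}{m}s_z\tilde f_{o,z}$ generated by the $s_z$-dynamics is cancelled precisely by $\tfrac{1}{\gamma_z}\tilde f_{o,z}\dot{\tilde f}_{o,z}=-\tfrac{1}{m}s_z\tilde f_{o,z}$ coming from \eqref{eq:adapt_fo} --- this is the purpose of the $1/m$ scaling in the adaptive law --- so that
\begin{equation*}
\dot V = -(k_z+1)\,s_z^2 \le 0 .
\end{equation*}
Hence $V$ is bounded and nonincreasing, $s_z$ and $\tilde f_{o,z}$ are bounded, and integrating $\dot V$ gives $\int_0^\infty s_z^2\,dt<\infty$.

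The main obstacle is the last step, because $\dot V$ is only negative \emph{semi}definite (it carries no $\tilde f_{o,z}$ term), so Lyapunov's theorem alone does not deliver $s_z\to 0$. I would close with Barbalat's lemma: boundedness of $s_z$ and $\tilde f_{o,z}$ makes the right-hand side of the $s_z$-dynamics bounded, so $\dot s_z$ is bounded and $s_z$ is uniformly continuous; together with $s_z\in L_2$ this forces $s_z\to 0$ as $t\to\infty$. I would also flag the constant-offset hypothesis as the point where rigor is most easily lost: if $\dot f_{o,z}\neq 0$ the cancellation in $\dot V$ leaves a residual and one obtains only ultimate boundedness, so the claim must be read under $\dot f_{o,z}\approx 0$.
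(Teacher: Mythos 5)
Your proposal matches the paper's Appendix~B essentially step for step: the same Lyapunov candidate $V_z = \tfrac{T}{2}s_z^2 + \tfrac{1}{2\gamma_z}\tilde f_{o,z}^2$, the same closed-loop error dynamics $T\dot s_z + (1+k_z)s_z = \tilde f_{o,z}/m$, and the same negative-semidefinite $\dot V_z = -(1+k_z)s_z^2$. The only (cosmetic) difference is that you close with Barbalat's lemma while the paper appeals to the invariant set theorem; your choice is if anything the more careful one for this adaptive (non-autonomous) setting, but the argument is the same.
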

\begin{proof}
See Appendix B for the proof. %\qed
\end{proof}
From Theorem \ref{thm:vertical}, considering (\ref{eq:s_z}), $z$ asymptotically converges to $z_d$ as $t \rightarrow \infty$.
\subsection{Lift-force demand and flapping-amplitude control}
From the demanded torque $\tau_d$ and force $f_d$ in (\ref{eq:adapt_tau}) and (\ref{eq:adapt_f}) as well as the mixing matrix in (\ref{eq:ftau_body}), the lift-force demand is given by
% --------------------------
\begin{equation}
{f}_{w,d}
=\begin{bmatrix}
{M}_1(3,:) \\
{M}_2
\end{bmatrix}^{-1}
\begin{bmatrix}
{f}_{d,z} \\
\boldsymbol{\tau}_d
\end{bmatrix}.
\label{eq:mixing_inverse}
\end{equation}

\textcolor{black}{
The flapping-wing actuator used in this study is driven by simply changing the flapping amplitude, as described in Section~2 \citep{2018_1st_Ozaki,2020_Jimbo}.
The relationship between the wing force $f_w$ and the flapping amplitude $V$ in \eqref{eq:fV} is modeled around the voltage at which the robot supports its own weight. 
}%textcolor

% as follows %\citep{2020_Jimbo}:
%\begin{equation}
%f_w = h(V)
%\label{eq:fV}.
%\end{equation}
%

%
As a result, the required flapping amplitude $V_{d}\in\mathbb{R}^4$  is derived from (\ref{eq:mixing_inverse}) and (\ref{eq:fV}) as $V_d = h^{-1}(f_{w,d})$. 
Then, $V_d$ is multiplied by a sinusoidal wave and then applied to the robot. Note that this can be performed using a digital circuit,
such as a pulse-width modulator or similar modulator,
which is much simpler than a circuit that generates complex analog waveforms \citep{2023_5th_Ozaki}.
%

%\clearpag
\section{Simulation}

\textcolor{black}{
The effectiveness of the proposed controller was validated 
through numerical simulations under unknown offsets. 
Before conducting the performance evaluations, 
the controllability of the flapping-wing robot fabricated 
with the tilting angles shown in Fig.~\ref{f:flappingwingRobots} 
was examined using the controllability analysis described in 
Section~\ref{sec:controllability}. 
The proposed controller was then compared with a conventional 
LQI controller derived from the linearization of 
(\ref{eq:Newton}) and (\ref{eq:Euler}), including the first-order 
lag of the wing forces.
}%textcolor
\textcolor{black}{
It should be noted that the LQI controller is designed based on a linearized model around a hovering equilibrium without offset terms. Accordingly, the simulation results using the LQI controller are intended to evaluate hovering performance under idealized conditions, rather than to represent a controller that is directly implementable on the hardware platform.
}%textcolor

\subsection{Controllability analysis}

% --------------------------
\begin{figure}[t!]
\vspace{-5mm}
\centering
\begin{subfigure}[b]{0.90\linewidth}
\centering
\includegraphics[width=8cm]{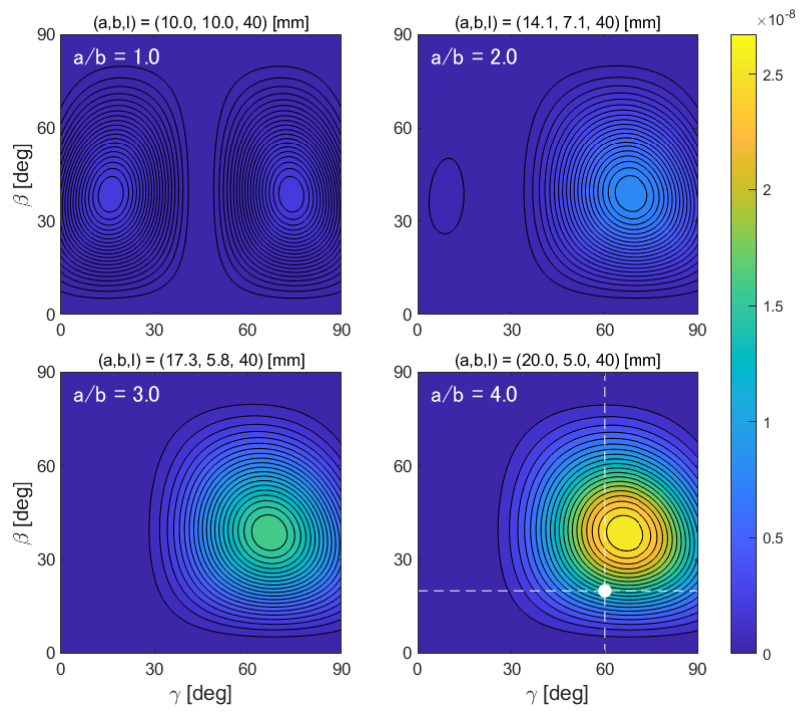}
\vspace{-5mm}
\caption{Determinant}
\label{f:detK}
\end{subfigure}
%\end{figure}
% --------------------------
% --------------------------
%\begin{figure}[h]
\centering
\begin{subfigure}[b]{0.90\linewidth}
\centering
\includegraphics[width=8cm]{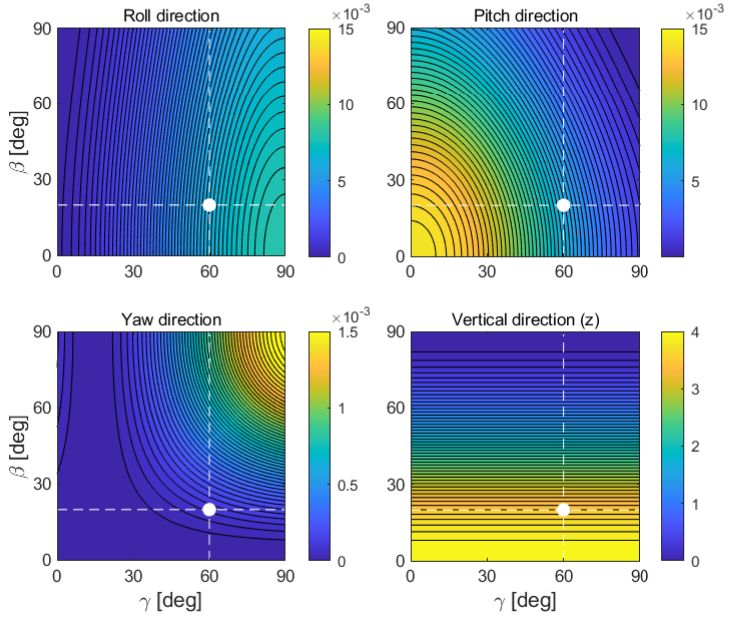}
\caption{Eigenvalues when $(a,b,l)=(20, 5, 40)$ mm.}
\label{f:eigK}
\end{subfigure}
\caption{
\textcolor{black}{
Controllability Gramian $BB^\top$
}%textcolor
}
%\label{f:CtrlGram}
\end{figure}
% --------------------------

%
Fig.~\ref{f:detK} shows the determinant of $BB^\top$ for varying $\beta$, $\gamma$, and the ratio of body lengths, $a/b$, at $ab=1$ 
\textcolor{black}{
$\rm{cm^2}$ and $l=40$ mm.
}%textcolor
A larger value of $a/b$ corresponds to a larger determinant of $BB^\top$ and a more controllable body. 
In this study, we set $(a,b)=(20,5)$ mm because $b=5$ mm is the smallest value feasible for manufacturing.

\textcolor{black}{
%Furthermore, as shown in the graph at the bottom right of Fig.~\ref{f:detK}, the determinant reaches its maximum when $(\beta, \gamma)=(38,66)$ degrees and $a/b=4$. According to \citet{2020_Jimbo}, a single paired wing produces approximately 0.5 gf of lift at 120~V, and therefore four paired wings can generate about 2.0 gf in total. In this configuration, with $\beta = 38$~deg, the effective lift is reduced to $\cos(\beta) = 0.79$ times the nominal lift, yielding approximately 1.58~gf, which corresponds to a lift-to-weight ratio of 1.04 for the 1.52 g robot. This consideration implies that, although the determinant maximization enhances controllability, the available lift margin becomes quite limited.
A single paired wing generates approximately $0.5$ gf at $120$ V \citep{2020_Jimbo}, giving a total of $2.0$ gf for four pairs.
The determinant peaks at $(\beta,\gamma)=(38,66)$ degrees and $a/b=4$ (Fig.~\ref{f:detK}), where the effective lift force becomes $\cos\beta \times 2.0 = 0.79\times2.0 = 1.58$ gf, corresponding to a lift-to-weight ratio of $1.04$ for the $1.52$~g robot.
This result indicates that determinant maximization enhances controllability but leaves little lift margin. 
In contrast, the design setting of $\beta = 20$~deg (white circle in Fig.~\ref{f:detK}) yields a lift of approximately $1.88$ gf, corresponding to a lift-to-weight ratio of $1.24$. 
This ratio lies within a reasonable range, considering that the previous three-pair-wing robot achieved about $1.3$, and thus the design can be regarded as appropriate in terms of lift margin. 
}%textcolor

\textcolor{black}{
Directions in the state space corresponding to large eigenvalues of $BB^\top\textcolor{black}{\in\mathbb{R}^{4\times4}}$ can be controlled with minimal input energy.
\textcolor{black}{
Fig.~\ref{f:eigK} shows the eigenvalues of $BB^\top$ for $(a,b,l) = (20,5,40)$ mm.
The correspondence to the roll, pitch, yaw, and vertical directions was determined from the associated eigenvectors.
}%textcolor
%Fig.~\ref{f:eigK} shows the eigenvalues of $BB^\top$ in the roll, pitch, yaw, and vertical directions.
It can be observed that the vertical direction is the most controllable, while the yaw direction is the least controllable.
In addition, prioritizing the yaw direction reduces the controllability in the vertical and pitch directions.
For the selected value of $\gamma = 60$~deg (white circle in Fig.~\ref{f:eigK}), the roll and pitch directions are well balanced, although the controllability in the yaw direction remains relatively weak.
}%textcolor

% --------------------------
\begin{table}[t!]
\begin{center}
%\caption{Model Parameters}
\caption{
Plant Parameters
\label{t:plant}
}
\begin{tabular}{llll}
\hline
Name & Symbol & Value & Unit \\
\hline \hline
Body mass & $m$  & \textcolor{black}{$1.52*10^{-3}$} & $\rm kg$\\
Body inertia  & $J_1$ & $1.50*10^{-7}$ & $\rm kg \cdot m^2$ \\
$(J={\rm diag}(J_1, J_2, J_3))$ & $J_2$ & $1.35*10^{-7}$ & $\rm kg \cdot m^2$ \\
 & $J_3$ & $2.21*10^{-7}$ & $\rm kg \cdot m^2$ \\
Body lengths & $a$ & $20.0*10^{-3}$ & $\rm m$ \\
 & $b$ & $5.0*10^{-3}$ & $\rm m$ \\
\hline
Wing angles &
 $\beta$ & $20*\pi/180$ & $\rm rad$ \\
 & $\gamma$ & $60*\pi/180$ & $\rm rad$ \\
 Length between lift \\ center and wing root
 & $l$ & $40.0*10^{-3}$ & $\rm m$ \\
\hline
Time constant of \\ first-order lag & $T$ & $0.013$ & sec \\
\hline
Gravitational \\ acceleration & $g$  & $9.81$ & $\rm kg\cdot m/sec^2$ \\
\hline
\end{tabular}
\end{center}
%\end{table}
% --------------------------
% --------------------------
%\begin{table}[b!]
\begin{center}
\caption{Control Parameters}
\vspace{3mm}
\label{t:parameters_Adapt}
\begin{tabular}{llll}
\hline
 & Symbol & Value & Unit \\
\hline \hline
Attitude
        & $h_x,h_y$ & $1/0.5$ & $\rm 1/sec$ \\
        & $k_{\eta1},k_{\eta2},k_{\eta3}$ & $1/0.1$ & $\rm 1/sec$ \\
        \cline{2-4}
	& $\lambda_{\omega1},\lambda_{\omega2},\lambda_{\omega3}$ & $1/0.1$ & $\rm 1/sec$ \\
		& $k_{\omega1}$ & $9.50*10^{-8}$ & - \\
		& $k_{\omega2}$ & $8.55*10^{-8}$ & - \\
		& $k_{\omega3}$ & $1.40*10^{-7}$ & - \\
		& $\gamma_{\omega1}$ & $7.70*10^{-6}$ & - \\
		& $\gamma_{\omega2}$ & $6.93*10^{-6}$ & - \\
		& $\gamma_{\omega3}$ & $1.13*10^{-5}$ & - \\
\cline{2-4}
		& $\hat{\tau}_{o,x}$,$\hat{\tau}_{o,y}$,$\hat{\tau}_{o,z}(t=0)$ & $0$ & ${\rm Nm}$ \\
\hline
Altitude
		& $\lambda_z$ & $1/0.5$ &  $\rm 1/sec$ \\
		& $k_z$ & $6.34*10^{-1}$ & - \\
		& $\gamma_{z}$ & $2.05*10^{-4}$ & - \\
		 \cline{2-4}
         & $\hat{f}_{o,z}(t=0)$ & $0$ & ${\rm N}$ \\
\hline
\end{tabular}
\end{center}
%\end{table}
% --------------------------
% --------------------------
%\begin{table}[b]
\begin{center}
\caption{
Unknown Offsets
}
\label{t:offset}
\begin{tabular}{llll}
\hline
& Symbol & Value & Unit \\
\hline \hline
& $\delta \beta$ & ${10}*\pi/180$ & $\rm rad$ \\
Case 1 & $\delta \gamma$ & ${10}*\pi/180$  & $\rm rad$ \\
& $\delta l$ & $5*10^{-3}$ & $\rm m$ \\
\hline
Cases 2 and 3 & ${\delta f}_w$ & $m g/4/{3}*[0, -1, 0, 0]^\top$ & $\rm  N$ \\
\hline
\end{tabular}
\end{center}
\end{table}
% --------------------------
%
\subsection{Simulation setup}
Tables \ref{t:plant} and \ref{t:parameters_Adapt} present the parameters of the plant and the proposed controller, respectively, used in the numerical simulations. 
Note that the parameters, particularly the value of the body inertia $J$, may differ from the actual values.
\textcolor{black}{
The weight is set to 1.52 g, which is identical to that of the actual robot shown in Fig.~\ref{f:flappingwingRobots}, in order to ensure consistency between the simulation and the flight experiment.
}%textcolor
%The weight is set at 2.0 g considering the untethered flapping-wing robot with an integrated battery \citep{2023_5th_Ozaki}.
%
Note that the control parameters were adjusted to obtain similar closed-loop responses from both controllers.

\subsection{Case studies}
In the simulations, we consider some case studies with lift force offsets, as shown in Table \ref{t:offset}.
\begin{description}
%----
\item[Case 1. Offset due to manufacturing errors] \mbox{}\\
The misalignments $\delta \beta$ and $\delta \gamma$ that occur during manufacturing were both set to 10 degrees. 
The error $\delta l$ in the distance $l$ from the mounting position of each wing to 
lift-force center was set to 5 mm for each wing.
%----
\item[Case 2. Offset due to modeling error] \mbox{}\\
There exists an error ${\delta f}_w$ in the lift-force model of (\ref{eq:fV}).
In this case, only the second wing has a modeling error of one-third of $mg/4$, which is necessary when four paired wings share the weight.
%----
\item[
Case 3. After adapting for modeling errors
]
\mbox{}\\
The numerical experiment for Case 2 was performed again. However, the offset torque $\hat{\tau}_o$ and force $\hat{f}_{o,z}$, estimated previously in Case 2, were used as the initial values of the parameter adaptation laws in equations (\ref{eq:adapt_tauo}) and (\ref{eq:adapt_fo}). 
\end{description}

\subsection{Comparison results}

% --------------------------
\begin{figure}[!b]
\centering
% --------------
  \begin{subfigure}[b]{1.00\linewidth}
      \centering % 8cm for 2 column 
      \includegraphics[width=9cm]{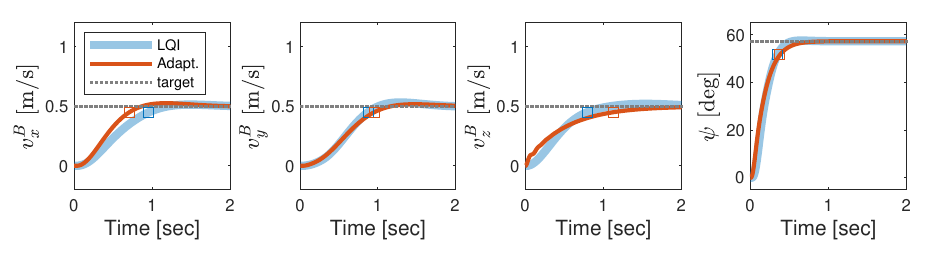}
      \vspace{-2mm}
      \caption{No lift force offset} 
      \label{f:NoOffset_VelAtt}      
  \end{subfigure}  
% --------------  
  \begin{subfigure}[b]{1.00\linewidth}
       \centering % 8cm for 2 column 
       \includegraphics[width=9cm]{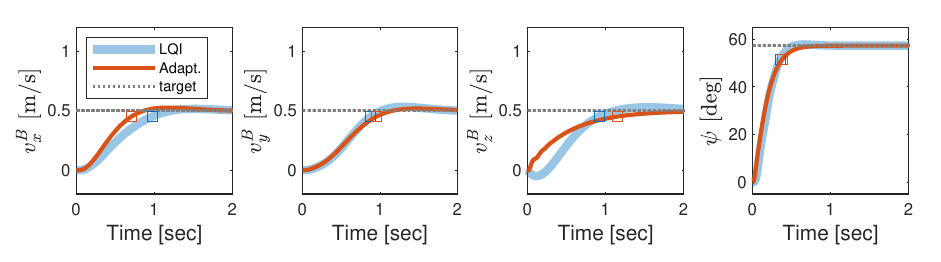}
       \vspace{-2mm}
       \caption{Case 1} \label{f:ProdErr_10deg_VelAtt}
  \end{subfigure}
% --------------  
  \begin{subfigure}[b]{1.00\linewidth}
       \centering % 8cm for 2 column 
       \includegraphics[width=9cm]{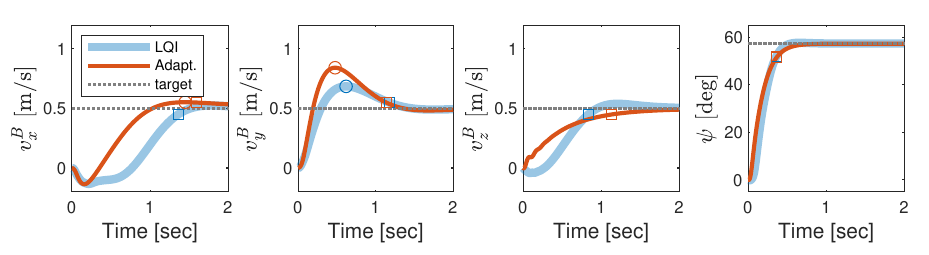}
       \vspace{-2mm}
       \caption{Case 2}
       \label{f:MapErr_df1by3_VelAtt}
  \end{subfigure}
% --------------  
  \begin{subfigure}[b]{1.00\linewidth}
       \centering % 8cm for 2 column 
       \includegraphics[width=9cm]{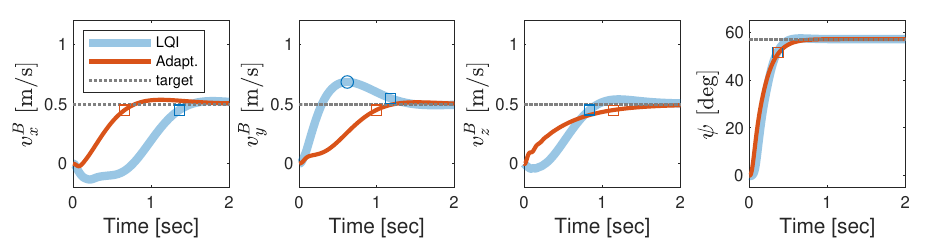}
       \vspace{-2mm}
       \caption{Case 3}
       \label{f:MapErr_df1by3_HotStart_VelAtt}
  \end{subfigure}
% --------------  
\caption{
\textcolor{black}{
Step responses. The circles and squares represent overshoot of $10 \%$ or greater, and settling times, respectively.
}%textcolor
}
\label{f:StepResponse}
\end{figure}
%%%%%%%%%%  2026/1 Add the table

%%%%%%%%%%
\begin{table}[!t]
\textcolor{black}{
\centering
\caption{
\textcolor{black}{
Performance metrics for step responses.
OS, ST, and SS RMSE denote the overshoot, settling time, and steady-state root mean square error evaluated over 1.5--2.0 s, respectively.
}%textcolor
}
\label{tab:performance}
%\small
\normalsize
\setlength{\tabcolsep}{5.0pt}
\begin{tabular}{p{0.8cm}lcccccc}
\hline
\multirow{2}{*}{Case} & \multirow{2}{*}{Output}
& \multicolumn{2}{c}{OS [\%]}
& \multicolumn{2}{c}{ST [sec]}
& \multicolumn{2}{c}{SS RMSE} \\
 &  & LQI & Adp. & LQI & Adp. & LQI & Adp. \\
\hline\hline
\multirow{4}{*}{
\makecell[l]{No-\\offset}
%No-offset
}
& $v_{x}^{B}$ [m/s] & 3  & 5  & 1.0 & 0.7 & 0.01 & 0.01 \\
& $v_{y}^{B}$ [m/s] & 6  & 3  & 0.9 & 1.0 & 0.01 & 0.01 \\
& $v_{z}^{B}$ [m/s] & 4  & 0  & 0.8 & 1.1 & 0.02 & 0.01 \\
& $\psi$ [deg]         & 1  & 0  & 0.3 & 0.4 & 0.01 & 0.03 \\
\hline
\multirow{4}{*}{Case 1}
& $v_{x}^{B}$ [m/s] & 3  & 5  & 1.0 & 0.7 & 0.01 & 0.01 \\
& $v_{y}^{B}$ [m/s] & 6  & 4  & 0.9 & 1.0 & 0.01 & 0.01 \\
& $v_{z}^{B}$ [m/s] & 6  & 0  & 0.9 & 1.1 & 0.02 & 0.02 \\
& $\psi$ [deg]         & 1  & 0  & 0.4 & 0.4 & 0.00 & 0.02 \\
\hline
\multirow{4}{*}{Case 2}
& $v_{x}^{B}$ [m/s] & 5  & 10 & 1.4 & 1.6 & 0.02 & 0.04 \\
& $v_{y}^{B}$ [m/s] & 37 & 68 & 1.2 & 1.1 & 0.01 & 0.01 \\
& $v_{z}^{B}$ [m/s] & 8  & 0  & 0.8 & 1.1 & 0.01 & 0.02 \\
& $\psi$ [deg]         & 1  & 0  & 0.4 & 0.4 & 0.02 & 0.08 \\
\hline
\multirow{4}{*}{Case 3}
& $v_{x}^{B}$ [m/s] & 5  & 7  & 1.4 & 0.7 & 0.02 & 0.01 \\
& $v_{y}^{B}$ [m/s] & 37 & 3  & 1.2 & 1.0 & 0.01 & 0.01 \\
& $v_{z}^{B}$ [m/s] & 8  & 0  & 0.8 & 1.1 & 0.01 & 0.02 \\
& $\psi$ [deg]         & 1  & 0  & 0.4 & 0.4 & 0.02 & 0.01 \\
\hline
\end{tabular}
\label{t:PerformanceMetrics}
}%textcolor
\end{table}

% --------------------------
\begin{figure}[t!]
\centering
% --------------------------
\begin{subfigure}[b]{0.5\linewidth}
    \centering
        \includegraphics[width=4.2cm]
        {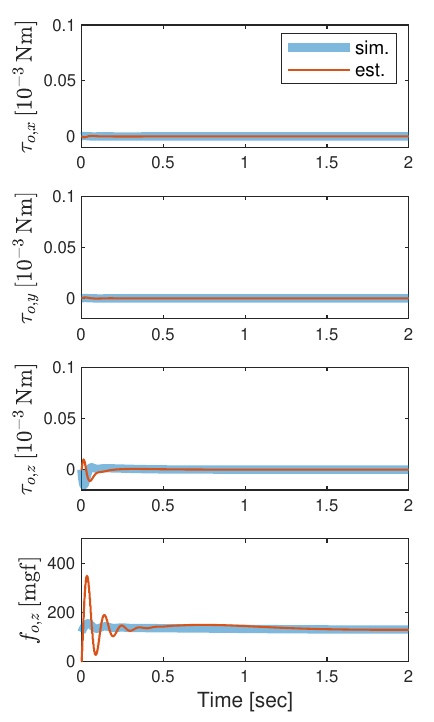}
        \caption{Case 1}
        \label{f:offset1}
    \end{subfigure}
    \hspace{-0.05\linewidth}
% --------------------------
\begin{subfigure}[b]{0.5\linewidth}
    \centering
        \includegraphics[width=4.2cm,height=7.3cm]
        {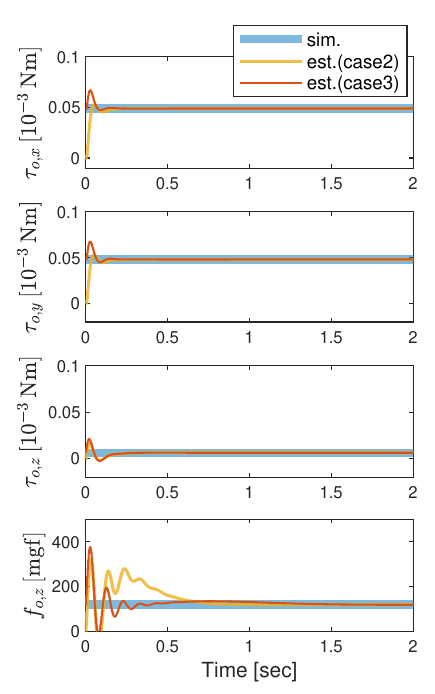}
        \caption{Cases 2 and 3}
        \label{f:offset23}
    \end{subfigure}    
% --------------------------
\caption{
\textcolor{black}{
Offsets and the estimates obtained from the proposed controller
}%textcolor
\label{f:offset}
}
\end{figure}

Fig. \ref{f:StepResponse} and Table \ref{t:PerformanceMetrics} show the step responses of both controllers and the performance metrics  (overshoot, delay time, and settling time) in four cases, including those without offsets. Here, a target value of $\boldsymbol{r}=[v_{x,d}^B, v_{y,d}^B, v_{z,d}^B, \psi_d]^\top = [0.5, 0.5, 0.5, 1]^\top$ was set, and the settling time threshold was defined as $\pm 10 \%$.

Fig.~\ref{f:NoOffset_VelAtt} presents the simulation result without the offsets in Table \ref{t:offset}. As shown, the velocity and attitude controlled by each controller converged to the targets in approximately $1$ s without significant overshoots.
The metrics in the no-offset case, shown in Table~\ref{t:PerformanceMetrics} were almost identical.

Second, Fig.~\ref{f:ProdErr_10deg_VelAtt} shows the result for Case 1 with the misalignments shown in Table \ref{t:offset}.
The control performance of the LQI controller was worse than that without the offset shown in Fig.~\ref{f:NoOffset_VelAtt}.
The reverse response of $v_z$ occurred for the LQI controller. 
In contrast, the proposed controller, 
as shown in Fig.~\ref{f:offset1},  estimated the unknown offset earlier, resulting in performance metrics in Table~\ref{t:PerformanceMetrics} that were nearly identical for both the no offset and Case 1.  

Third, Fig.~\ref{f:MapErr_df1by3_VelAtt} shows the result for Case 2 with the wing force voltage modeling error shown in Table \ref{t:offset}.
The control performance of the LQI controller was worse than that without the offset shown in Fig.~\ref{f:NoOffset_VelAtt}.
The reverse responses of $v_x$ and $v_z$ occurred for the LQI controller.
$v_y$ had an overshoot.
In contrast, for the proposed controller, the response of $v_z$ was the same as that in Fig.~\ref{f:NoOffset_VelAtt}. 
However, a reverse response of $v_x$ occurred, and $v_y$ had a larger overshoot than that of the LQI controller 
as shown in Table~\ref{t:PerformanceMetrics}. 
Fig.~\ref{f:offset23} shows the offset for Case 2. The larger overshoot is due to the fact that the estimation of the offset force $f_{o,z}$ took approximately $0.7$ s.
%

%
%Finally, Fig.~\ref{f:MapErr_df1by3_HotStart_VelAtt} shows the result for Case 3 using the proposed controller with the parameters adapted for modeling errors.
%
%The response of the proposed controller was neither a reverse response nor an overshoot. Additionally, as also shown in Table~\ref{t:PerformanceMetrics}, the result was clearly improved compared to that shown in Fig.~\ref{f:MapErr_df1by3_VelAtt}. This is because, as shown in Fig.~\ref{f:offset23}, the estimate of the offset force $f_{o,z}$ converges early in approximately $0.1$ s. 

Finally, Fig.~\ref{f:MapErr_df1by3_HotStart_VelAtt} shows the result for Case 3 using the proposed controller with the parameters adapted for modeling errors.
\textcolor{black}{
The response of the proposed controller was neither a reverse response nor an overshoot, in contrast to the LQI controller.
Additionally, as shown in Table~\ref{t:PerformanceMetrics}, the control performance was clearly improved compared to that shown in Fig.~\ref{f:MapErr_df1by3_VelAtt}.
This improvement is attributed to the rapid convergence of the estimated offset force $f_{o,z}$, which converges within approximately $0.1$ s, as shown in Fig.~\ref{f:offset23}.
In addition, it is observed that, in all simulation cases (Cases 1–3), the LQI controller exhibits a pronounced transient undershoot in the vertical response when offset uncertainties are present.
}%textcolor

%//////////////////////////////////////////////////////////////////////////////////////////////////////////////////////////////////////////
%
%\clearpage
\section{Experiment}
The effectiveness of the proposed controller was demonstrated through a flight experiment.  
\textcolor{black}{
The weight of the flapping-wing robot was set to $1.52$ g, which is identical to that of the robot shown in Fig.~\ref{f:flappingwingRobots} and consistent with the simulation conditions in Table~\ref{t:plant}.
}%textcolor

%The weight, which is one of the model parameters, was set to $1.52$ g for the flapping-wing robot, as shown in Fig.~\ref{f:flappingwingRobots}.
%

%
\subsection{Experimental setup}
Our experimental system is depicted in Fig.~\ref{f:exp_sys}. 
This system tracks markers attached to the body using 
the OptiTrack Prime 17W motion capture system ($1.7$ MP, $70$-deg field of view, set at $333$ fps, NaturalPoint) 
and a computer (Intel Core i9-9900K, $8$-core $3.6$ GHz, $64$ GB of RAM), which calculates the position and orientation of the robot. 
These values are then sent to a control computer (Intel Core i7-7700K, $4$-core $4.2$ GHz, $32$ GB of RAM) within $3$ ms.
The proposed controller calculates the flapping amplitude, which is then multiplied by a sinusoidal wave at 115 Hz, generated by a function generator (Precision 4050B, B\&K), using a multiplier (AD633, Analog Devices). The amplitude is further amplified 30 times using an amplifier (HJPZ-0.3P×3, Matsusada Precision) and then applied to the robot through enameled wires. Note that the same flapping amplitude is applied to paired wings to generate an equal force. 

\textcolor{black}{
In the experiments, the proposed adaptive controller is activated after a short open-loop liftoff phase of 0.12 s. During this phase, a constant voltage is applied to lift the robot slightly from the ground, resulting in a vertical displacement of 0.78 cm, which is negligible compared to the 110 mm wingspan. No feedback control, parameter estimation, or alignment is performed during this phase. Its sole purpose is to exclude the effects of initial contact with the environment before engaging the adaptive controller. 
%After approximately 1.5 s, the robot exhibited a loss of controlled flight; therefore, quantitative evaluation is limited to the 1.5 s controlled flight duration.
}%textcolor

% --------------------------
\begin{figure}[t!]
\vspace{-4mm}
  \begin{center}
    \includegraphics[width=8cm]{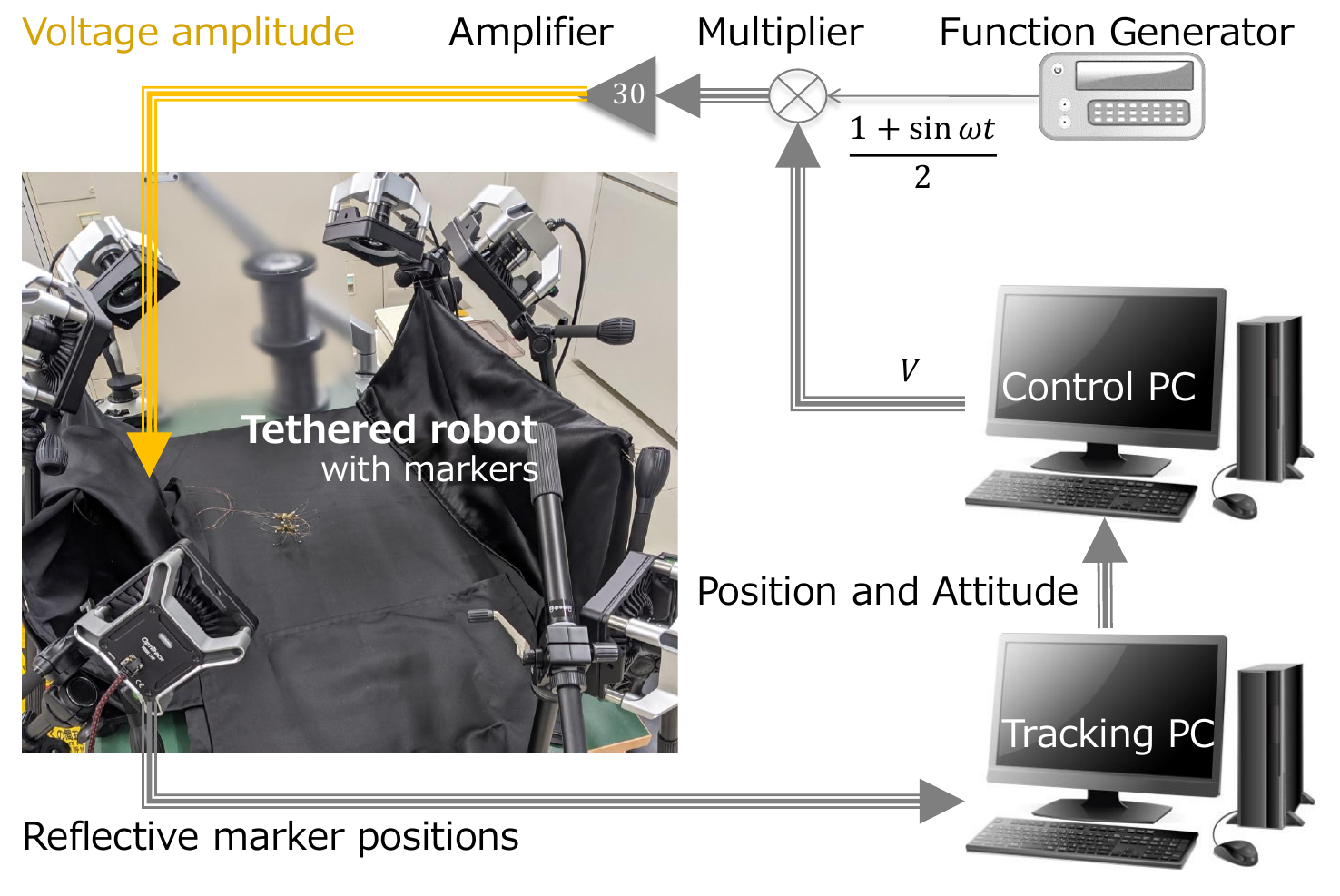}
    %\vspace{2mm}
    \caption{Experimental system for tethered controlled flight}
    \label{f:exp_sys}
  \end{center}
\end{figure}

% --------------------------
\begin{figure}[t!]
% -----------------
\centering
  \begin{subfigure}[b]{0.85\linewidth}
      \centering
      \includegraphics[width=8cm]{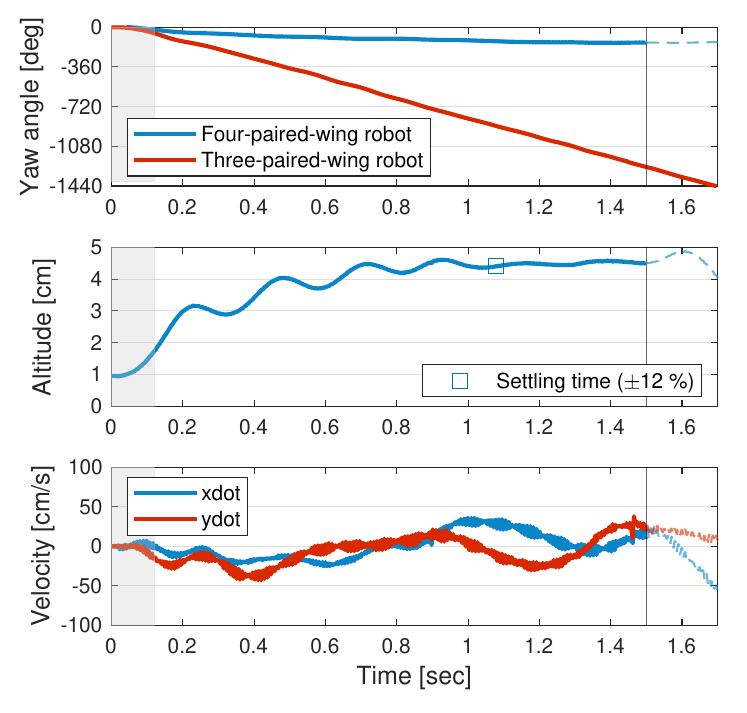}
      %{Figure_8a_exp_revised.pdf}
      \caption{
      \textcolor{black}{
Time histories of yaw angle, altitude, and horizontal velocity during flight experiments. The shaded region indicates the open-loop liftoff phase, and dashed segments after 1.5 s indicate the loss of controlled flight.
}%textcolor
}
      %Time-series responses. 
      %Time histories of yaw angle, altitude, and horizontal velocity during flight experiments. \textcolor{black}{The square indicates the settling time defined by a $\pm12 \%$ threshold.} The shaded region indicates the open-loop liftoff phase. Dashed segments after 1.5 s indicate the loss of controlled flight.}
      \label{f:exp_response}
  \end{subfigure}
% -----------------
\centering
  \begin{subfigure}[b]{0.85\linewidth}
      \centering
      \includegraphics[width=8cm]{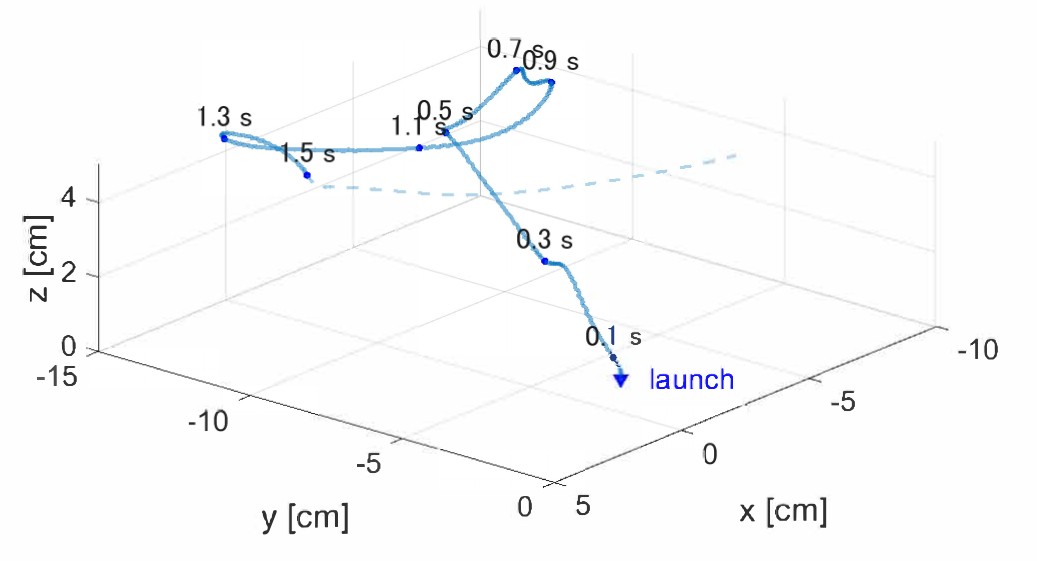}
    %Figure_8b_exp_trajectory.pdf
      \caption{
      \textcolor{black}{
      Three-dimensional flight trajectory.The solid line shows the controlled flight period, while the dashed segment indicates the motion after the loss of controlled flight.
      }%textcolor
      }
      % Flight trajectory
      \label{f:exp_traj}
  \end{subfigure}
% -----------------
\caption{
\textcolor{black}{
Experimental results
}%textcolor
\label{f:exp}
}
\vspace{-3mm}
\end{figure}
% --------------------------
% --------------------------
\begin{figure*}[t!]
\begin{center}
\includegraphics[width=\linewidth]{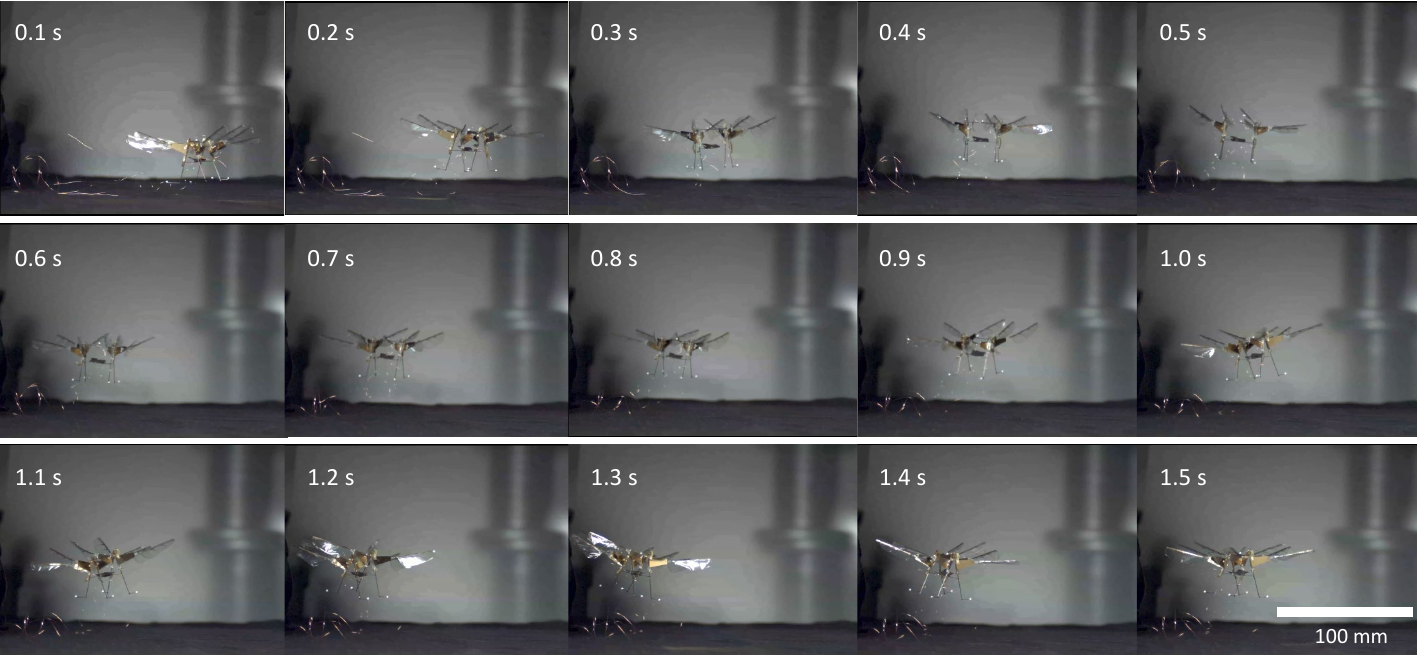} 
\caption{
\textcolor{black}{
Photographic sequence of a controlled flight.
}%textcolor
\label{f:Frames_Ctrl}
}
\end{center}
\end{figure*}
% --------------------------

\subsection{Results}

The results of a flight experiment with the four tilted paired-wing robot using the proposed controller are shown in Fig.~\ref{f:exp}. 
The upper graph in Fig.~\ref{f:exp_response} shows the yaw angle of the three paired-wing robot in \citep{2020_Jimbo}.
Fig.~\ref{f:Frames_Ctrl} shows the sequential shots taken during the controlled flight. 
Here, the target values are set to $\psi_d = 0$, $z_d = 0.05$, and $v_{x,d}^B=v_{y,d}^B=0$. As $v_{x,d}^B=v_{y,d}^B=0$, the target values in the absolute coordinate system are also $v_{x,d}=v_{y,d}=0$. Considering the weak yaw torque, the target angular velocity $\boldsymbol{\omega}_d$ of the attitude control is derived from the target roll angle $\phi_d$, target pitch angle $\theta_d$, and current yaw angle $\psi$.

\textcolor{black}{
The yaw angle $\psi$ shown in the upper graph of Fig.~\ref{f:exp_response} gradually deviates from the target value $\psi_d=0$ but converges to a final value.
}%textcolor
The amount of the yaw drift is suppressed compared to the yaw angle of the three-paired-wing robot.
On the other hand, because controlling the yaw angle reduces the lift force in the vertical direction,
the altitude in the middle graph of Fig.~\ref{f:exp_response} oscillates, 
but it can be controlled to tend to the target value of $z_d=0.05$ m.
Furthermore, the translational velocity in the lower graph of Fig.~\ref{f:exp_response} is mostly controlled around the target value $v_{x,d}=v_{y,d}=0$. 
The oscillation is caused by ignoring the infinitesimal forces $f_{body,x}$ and $f_{body,y}$. 
Fig.~\ref{f:exp_traj} shows the flight trajectory. The flight time was approximately $1.5$ s.

\textcolor{black}{
Unlike the LQR-based flight experiments in \citep{npjRobotics2025}, which assumed an approximately trimmed and aligned condition before feedback control, the present experiment did not rely on such an assumption and instead compensated misalignment effects online during flight via adaptive control.
}%textcolor

\textcolor{black}{
To quantitatively evaluate the experimental performance, the following metrics were computed. At the simulation settling time of 1.1 s ($\pm10\%$), the experimental altitude response remained within a $\pm 12\%$ band of the target altitude of 5 cm, corresponding to an experimental settling time of 1.08 s under this $\pm 12\%$ criterion. In terms of steady-state performance, the root mean square error (RMSE) of the altitude, evaluated over the interval from 1.0 s to 1.5 s, was 0.53 cm (approximately $10\%$ of the target altitude). The RMSEs of the horizontal velocities, $\dot{x}$ and $\dot{y}$, over the 1.5 s flight duration were 16.0 cm/s and 18.0 cm/s, respectively. In addition, the yaw drift was suppressed to within 100 deg, compared to approximately 850 deg for the three-paired-wing robot.
}%textcolor

%//////////////////////////////////////////////////////////////////////////////////////////////////////////////////////////////////////////
%\newpage
\section{Conclusion}

In this study, a flapping-wing robot with four paired tilted wings was developed to enable simultaneous control of four states, including yaw angle. 
\textcolor{black}{
The controllability Gramian was derived to quantify the controllability of the four states 
and to analyze the control characteristics of the fabricated configuration.
}%textcolor
The wings are directly driven by piezoelectric actuators without transmission, and lift control is achieved simply by changing the voltage amplitude. 
However, misalignment or errors in the lift force model could cause an offset. 
Therefore, we designed an adaptive controller to alleviate the offset problem. Numerical experiments confirm that the proposed controller shows improved control performance compared to the LQI controller by adapting to unknown lift offsets. Finally, a tethered, controlled flight was performed, and the yaw drift was suppressed by the tilted-wing arrangement and the proposed controller.
Long-duration controlled flights will require on-board power. Controlled flights of battery-powered, tailless, flapping-wing robots weighing less than $10$ g are still challenging. 
In \citep{2023_5th_Ozaki}, we reported the first takeoff of the lightest battery-powered, tailless, flapping-wing robot ($2.1$ g insect scale), with a lift-to-power efficiency of approximately $5$ gf/W. 
The robot was equipped with a low-power circuit designed for digital duty-ratio control, a sensor unit, and a LiPo battery; however, it was not a controlled flight.  
In the future, we will incorporate the tilted-wing arrangement and the proposed controller into the system developed in \citep{2023_5th_Ozaki} to accomplish untethered battery-powered controlled flight.
Furthermore, to extract sufficient yaw torque, it is essential to improve the generation of lift force.
%

%% The Appendices part is started with the command \appendix;
%% appendix sections are then done as normal sections
\appendix
\section{Proof of Theorem \ref{thm:attitude}}

Consider the following Lyapunov function candidate for the attitude control system:
% --------------------------
\begin{equation}
V_\omega = s_\omega^\top (T J) s_\omega/2 + \tilde{\tau}_o^\top \Gamma_\omega^{-1} \tilde{\tau}_o/2.
\label{eq:Lyapunov_omega}
\end{equation}
% --------------------------
where $\tilde{\tau}_o = \hat{\tau}_o - \tau_o$.
Note that the dynamics of $s_\omega$ is derived from (\ref{eq:Euler}), (\ref{eq:tau-delay}) and (\ref{eq:s_omega}) with (\ref{eq:adapt_tau}):  
% --------------------------
\begin{equation}
T J \dot s_\omega + (J+K_\omega)s_\omega = \tilde{\tau}_o
\label{eq:s_omega_dynamics}
\end{equation}
The time derivative of (\ref{eq:Lyapunov_omega}) becomes a negative definite considering (\ref{eq:adapt_tauo}) and (\ref{eq:s_omega_dynamics}): 
% --------------------------
\begin{equation*}
\dot{V}_\omega 
= s_\omega^\top (T J \dot{s}_\omega) + \tilde{\tau}_o^\top \Gamma_\omega^{-1} \dot{\hat{\tau}}_o 
= -s_\omega^\top (J + K_\omega) s_\omega \le 0.
\end{equation*}
As a result, 
the attitude control system is theoretically, globally, and asymptotically stable according to the invariant set theorem to satisfy $s_\omega \rightarrow 0$ as $t \rightarrow \infty$.
\section{Proof of Theorem \ref{thm:vertical}}
Similar to Appendix A, consider the following Lyapunov function candidate for the vertical control system:
% --------------------------
\begin{equation}
V_z = T{s^2_z}/2 + {\tilde{f}^{~2}_{o,z}}/(2\gamma_z)
\label{eq:Lyapunov_z}
\end{equation}
% --------------------------
where $\tilde{f}_{o,z} = \hat{f}_{o,z} - f_{o,z}$.
Note that the dynamics of $s_z$ is derived from (\ref{eq:vz}), (\ref{eq:f-delay}), and (\ref{eq:s_vz}) or (\ref{eq:s_z}) with (\ref{eq:adapt_f}):  
%
% --------------------------
\begin{equation}
T \dot s_z + (1+k_z)s_z = \tilde{f}_{o,z}/m
\label{eq:s_z_dynamics}
\end{equation}
The time derivative of (\ref{eq:Lyapunov_z}) becomes a negative definite considering (\ref{eq:adapt_fo}) and (\ref{eq:s_z_dynamics}): 
% --------------------------
\begin{equation*}
\dot{V}_z 
=  T s_z \dot{s}_z + \tilde{f}_{o,z}  \dot{\hat{f}}_{o,z}/\gamma_z
= -(1+k_z) s_z^2 \le 0.
\end{equation*}
As a result, 
the vertical control system is theoretically, globally, and asymptotically stable according to the invariant set theorem to satisfy $s_z \rightarrow 0$ as $t \rightarrow \infty$.
%

%% If you have bib database file and want bibtex to generate the
%% bibitems, please use
%%
%%  \bibliographystyle{elsarticle-num} 
%%  \bibliography{<your bibdatabase>}

%% else use the following coding to input the bibitems directly in the
%% TeX file.

%% Refer following link for more details about bibliography and citations.
%% https://en.wikibooks.org/wiki/LaTeX/Bibliography_Management

\bibliographystyle{elsarticle-num-names}

%\biboptions{authoryear}   % ← これを入れるのが重要
%\bibliographystyle{model5-names}

\bibliography{CEP2025mav4}

@article{2025Hsiao+,
	author={Hsiao, Yi-Hsuan and Tagliabue, Andrea and Matteson, Owen and Kim, Suhan and Zhao, Tong and How, Jonathan P. and Chen, YuFeng},
	title={Aerobatic maneuvers in insect-scale flapping-wing aerial robots via deep-learned robust tube model predictive control},
	journal={Science Advances},
	year={2025},
	volume={11},
	pages={eaea8716},
}

@article{2025Wu+,
	author={Wu, Jianghao and Cheng, Cheng and Zhang, Yanlai and Tang, Peng and Zhou, Chao and Cao, Heyu and Chen, Long},
	title={Design of a hover-capable flapping wing micro air vehicle with abdomen--wing coupled control},
	journal={Chinese Journal of Aeronautics},
	year={2025},
	volume={},
	pages={103807},
}

@article{2024Chen+,
	author={Chen, Long and Cheng, Cheng and Zhou, Chao and Zhang, Yanlai and Wu, Jianghao},
	title={Flapping rotary wing: A novel low-Reynolds-number layout merging bionic features into micro rotors},
	journal={Progress in Aerospace Sciences},
	year={2024},
	volume={146},
	pages={100984},
}

@article{npjRobotics2025,
  title   = {Modeling and {LQR} control of insect-sized flapping-wing robot},
  author  = {Dhingra, Daksh and Kaheman, Kadierdan and Fuller, Sawyer B.},
  journal = {npj Robotics},
  volume  = {3},
  number  = {6},
  year    = {2025},
  pages   = {1--7},
  $doi     = {10.1038/s44182-025-00022-7}
}

@article{1999Dickinson+,
	author={Michael H. Dickinson and Fritz-Olaf Lehmann and Sanjay P. Sane},
	title={Wing Rotation and the Aerodynamic Basis of Insect Flight},
	journal={Science},
	year={1999},
	volume={284},
	pages={1954--1960},
}

@article{2012Christopher+,
	author={Christopher T. Orlowski and Anouck R. Girard},
	title={Dynamics, stability, and control analyses of flapping wing micro-air vehicles},
	journal={Progress in Aerospace Sciences},
	year={2012},
	volume={51},
	pages={18--30},
}

@article{2018Helbling+,
	author={E. Farrell Helbling and Robert J. Wood},
	title={A Review of Propulsion, Power, and Control Architectures for Insect-Scale Flapping-Wing Vehicles},
	journal={Applied Mechanics Reviews},
	year={2018},
	volume={70},
	number={1},
	pages={010801},
}

@article{2019Phan+Review,
	author={Hoang Vu Phan and Hoon Cheol Park},
	title={Insect-inspired, tailless, hover-capable flapping-wing robots: Recent progress, challenges, and future directions},
        journal={Progress in Aerospace Sciences},
	year={2019},
	volume={111},
	pages={100573},
}

@article{2009Croon+,
	author={G. C. H. E. de Croon and K. M. E. de Clercq and R. Ruijsink and B. Remes and C. de Wagter},
	title={Design, aerodynamics, and vision-based control of the {DelFly}},
	journal={International Journal of Micro Air Vehicles},
	year={2009},
	volume={1},
	number={2},
	pages={71--97},
}

@article{2012Keennon+,
	author={M. Keennon and K. Klingebiel and H. Won and A. Andriukov},
	title={Development of the nano hummingbird: A tailless flapping wing micro air vehicle},
	journal={Proceedings of 50th AIAA Aerospace Sciences Meeting including the New Horizons Forum and Aerospace Exposition},
	year={2012},
	pages={1--24},
      address={Nashville, TN, USA},
}

@article{2015Nguyen+,
	author={Quoc-Viet Nguyen and Woei Leong Chan and Debiasi Marco},
	title={An insect-inspired flapping wing micro air vehicle with double wing clap-fling effects and capability of sustained hovering},
	journal={Proceedings of the SPIE},
	year={2015},
        vol={9429},
	pages={136–-146},
}

@article{2018Karasek+,
	author={M. Karásek and F. T. Muijres and C. De Wagter and B. D. W. Remes and G. C. H. E. de Croon},
	title={A tailless aerial robotic flapper reveals that flies use torque coupling in rapid banked turns},
	journal={Science},
	year={2018},
        vol={361},
        no={6407},
	pages={1089–-1094},
}

@article{2019Phan+,
	author={Hoang Vu Phan and Steven Aurecianus and Taesam Kang and Hoon Cheol Park},
	title={KUBeetle-S: An insect-like, tailless,
hover-capable robot that can fly with
a low-torque control mechanism},
	journal={International Journal of Micro Air Vehicles},
	year={2019},
        vol={361},
        no={6407},
	pages={1089–-1094},
}

@article{2008Wood,
	author={R. J. Wood},
	title={The first takeoff of a biologically inspired at-scale robotic insect},
	journal={IEEE Transactions on Robotics},
	year={2008},
	volume={24},
	number={2},
	pages={341--347},
}

@article{2013Ma+,
	author={K. Y. Ma and P. Chirarattananon and S. B. Fuller and R. J. Wood},
	title={Controlled flight of a biologically inspired, insect-scale robot},
	journal={Science},
	year={2013},
	volume={340},
	number={6132},
	pages={603--607},
	month=may
}

@article{2018_1st_Ozaki,
	author={Takashi Ozaki and Kanae Hamaguchi},
	title={Performance of direct-driven flapping-wing actuator with piezoelectric single-crystal {PIN-PMN-PT}},
	journal={Journal of Micromechanics and Microengineering},
	year={2018},
	volume={28},
	number={2},
	pages={},
}

@article{2018_2nd_Ozaki, 
	author={Takashi Ozaki and Kanae Hamaguchi},
	title={Electro-Aero-Mechanical Model of Piezoelectric Direct-Driven Flapping-Wing Actuator},
	journal={Applied Sciences},
	year={2018},
	volume={8},
	number={9},
	pages={1699},
}

@article{2018_3rd_Ozaki,
	author={Takashi Ozaki and Kanae Hamaguchi},
	title={Bioinspired Flapping-Wing Robot With Direct-Driven Piezoelectric Actuation and Its Takeoff Demonstration},
	journal={IEEE Robotics and Automation Letters},
	year={2018},
	volume={3},
	number={4},
	pages={4217--4224},
}

@article{2020_Jimbo,
	author={Tomohiko Jimbo and Takashi Ozaki and Yasushi Amano and Kenji Fujimoto},
	title={Flight Control of Flapping-wing Robot With Three Paired Direct-Driven Piezoelectric Actuators},
	journal={Proceedings of 21st IFAC World Congress},
	year={2020},
	pages={936},
	address={Germany},
}

@article{2023_5th_Ozaki,
	author={Takashi Ozaki and Norikazu Ohta and Tomohiko Jimbo and Kanae Hamaguchi},
	title={Takeoff of a 2.1 g Fully Untethered Tailless
Flapping-Wing Micro Aerial Vehicle With Integrated Battery},
	journal={IEEE Robotics and Automation Letters},
	year={2023},
	volume={8},
	number={6},
	pages={3574--3580},
}

@article{2014Helbling+,
	author={E. Farrell Helbling and Sawyer Buckminster Fuller and Robert J. Wood},
	title={Pitch and yaw control of a robotic insect using an onboard magnetometer},
	journal={Proceedings of IEEE International Conference on Robotics and Automation (ICRA)},
	year={2014},
	pages={5516--5522},
}

@article{2015Teoh,
	author={Zhi Ern Teoh},
	title={Design of Hybrid Passive and Active Mechanisms for Control of Insect-Scale Flapping-Wing Robots},
	journal={Ph.D. dissertation},
	year={2015},
}

@article{2019Fuller,
	author={Sawyer B. Fuller},
	title={Four Wings: An Insect-Sized Aerial Robot With
Steering Ability and Payload Capacity for Autonomy},
	journal={IEEE Robotics and Automation Letters},
	year={2019},
	volume={4},
	number={2},
	pages={570--577},
}

@article{2019Steinmeyer+,
	author={Rebecca Steinmeyer and Nak-seung P. Hyun and E. Farrell Helbling and Robert J. Wood},
	title={Yaw Torque Authority for a Flapping-Wing Micro-Aerial Vehicle},
	journal={Proceedings of IEEE International Conference on Robotics and Automation (ICRA)},
	year={2019},
	pages={2481--2487},
}

@article{2021Chukewad+,
	author={Y. M. Chukewad and S. Fuller},
	title={Yaw Control of a Hovering Flapping-Wing Aerial Vehicle With a Passive Wing Hinge},
	journal={IEEE Robotics and Automation Letters},
	year={2021},
	volume={6},
	number={2},
	pages={1864--1871},
}

@article{2019Chen+,
	author={Yufeng Chen and Huichan Zhao and Jie Mao, Pakpong Chirarattananon and E. Farrell Helbling and Nak-seung Patrick Hyun and David R. Clarke and Robert J. Wood},
	title={Controlled flight of a microrobot powered by soft artificial muscles},
	journal={Nature},
	year={2019},
        volume={575},
	pages={324–-329},
}

@book{Kailath1980,
  author    = {Thomas Kailath},
  title     = {Linear Systems},
  publisher = {Prentice-Hall, Inc.},
  year      = {1980}
}

@book{2016delfly,
  author    = {G.C.H.E. de Croon and M. Perçin and B.D.W. Remes and R. Ruijsink and C. De Wagter},
  title     = {The DelFly: Design, Aerodynamics, and Artificial Intelligence of a Flapping Wing Robot},
  publisher = {Springer},
  year      = {2016}
}

@article{2016Rosen,
	author={Michelle H. Rosen and Geoffroy le Pivain and Ranjana Sahai and Noah T. Jafferis and Robert J. Wood},
	title={Development of a 3.2g Untethered Flapping-Wing Platform for Flight Energetics and Control Experiments},
	journal={Proceedings of IEEE International Conference on Robotics and Automation (ICRA)},
	year={2016},
	pages={3227--3233},
}

\end{document}